\newcommand{\citet}[1]{\citeauthor{#1} \shortcite{#1}}
\newcommand{\citep}{\cite}
\newcommand\numberthis{\addtocounter{equation}{1}\tag{\theequation}}
\newcommand{\fresh}{{\small\textsc{Fresh}}}
\newcommand{\xhdr}[1]{\noindent{{\bf #1.}}}
\newcommand{\err}{\mathrm{err}}
\newcommand{\ui}{\textsc{ui}}
\newcommand{\ur}{\textsc{ur}}
\newcommand{\ETC}{\textsc{ec}}
\newcommand{\prob}{\mathbf{P}}
\DeclareMathOperator*{\maximize}{\text{maximize\ }}
\newcommand{\R}{\mathbb{R}}
\newcommand{\N}{\mathbb{N}}
\DeclareMathOperator*{\E}{\mathbb{E}}
\newcommand{\cL}{\mathcal{L}}
\newcommand{\given}{\, \vert \,}
\newcommand{\srho}{\rho^\star}
\newcommand{\hrho}{\widehat{\rho}}
\newcommand{\bhrho}{\widehat{\boldsymbol{\rho}}}
\newcommand{\bsrho}{\boldsymbol{\rho}^\star}
\newcommand{\brho}{\boldsymbol{\rho}}
\newcommand{\bkappa}{\boldsymbol{\kappa}}
\newcommand{\bhkappa}{\widehat{\boldsymbol{\kappa}}}
\newcommand{\hxi}{\widehat{\xi}}
\newcommand{\txi}{\widetilde{\xi}}
\newcommand{\hp}{\widehat{p}}
\newcommand{\bhxi}{\widehat{\boldsymbol{\xi}}}
\newcommand{\bxi}{\boldsymbol{\xi}}
\newcommand{\bzeta}{\boldsymbol{\zeta}}
\title{Learning to Crawl}
\title{Learning to Crawl}
\author{
Utkarsh Upadhyay\\
Resonal, Berlin, Germany\\
{utkarsh@reason.al}
\\\And
R{\'o}bert Busa-Fekete\\
Google Research, NY, USA\\
{busarobi@google.com}
\\\And
Wojciech Kot{\l}owski\\
Poznan University of Technology, Poland\\
{wkotlowski@cs.put.poznan.pl}
\\\AND
D{\'a}vid P{\'a}l\\
Yahoo! Research, NY, USA\\
{davidko.pal@gmail.com}
\\\And
Bal{\'a}zs Sz{\"o}r{\'e}nyi\\
Yahoo! Research, NY, USA\\
{szorenyi.balazs@gmail.com}
}
\begin{document}

\maketitle

\begin{abstract}
Web crawling is the problem of keeping a cache of webpages \emph{fresh}, \ie, having the most recent copy available when a page is requested. This problem is usually coupled with the natural restriction that the bandwidth available to the web crawler is limited. The corresponding optimization problem was solved optimally by \citet{Azar8099} under the assumption that, for each webpage, both the elapsed time between two changes and the elapsed time between two requests follows a Poisson distribution with \emph{known} parameters. In this paper, we study the same control problem but under the assumption that the change rates are \emph{unknown} a priori, and thus we need to estimate them in an online fashion using only partial observations (\ie, single-bit signals indicating whether the page has changed since the last refresh). As a point of departure, we characterise the conditions under which one can solve the problem with such partial observability. Next, we propose a practical estimator and compute confidence intervals for it  in terms of the elapsed time between the observations. Finally, we show that the explore-and-commit algorithm achieves an $\Ocal(\sqrt{T})$ regret with a carefully chosen exploration horizon. Our simulation study shows that our online policy scales well and achieves close to optimal performance for a wide range of parameters.
\end{abstract}

\section{Introduction}

As information dissemination in the world becomes near real-time, it becomes more and more important for search engines, like Bing and Google, and other knowledge repositories to keep their caches of information and knowledge fresh.
In this paper, we consider the web-crawling problem of designing policies for refreshing webpages in a local cache with the objective of maximizing the number of incoming requests which are served with the latest version of the page.
Webpages are the simplest and most ubiquitous source of information on the internet.
As items to be kept in a cache, they have two key properties: (i) they need to be polled, which uses bandwidth, and (ii) polling them only provides partial information about their change process, \ie, a single bit indicating whether the webpage has changed since it was last refreshed or not.
\citet{cho2003effective} in their seminal work presented a formulation of the problem which was recently studied by~\citet{Azar8099}.
Under the assumption that the changes to the webpages and the requests are Poisson processes with known rates, they describe an efficient algorithm to find the optimal refresh rates for the webpages.

However, the change rates of the webpages are often not known in advance and need to be estimated.
Since the web crawler cannot continuously monitor every page, there is only partial information available on the change process.
\citet{cho2003estimating}, and more recently~\citet{li2017temporal}, have proposed estimators of the rate of change given partial observations.
However, the problem of learning the refresh rates of items while also trying to optimise the objective of keeping the cache as up-to-date for incoming requests as possible seems very challenging.
On the one hand, the optimal policy found by the algorithm by~\citet{Azar8099} does not allocate bandwidth for pages that are changing very frequently, and, on the other hand, rate estimates with low precision, especially for those that are changing frequently, may result in a policy that has non-vanishing regret.
We formulate this web-crawling problem with unknown change rates as an online optimization problem for which we define a natural notion of regret, describe the conditions under which the refresh rates of the webpages can be learned, and show that using a simple explore-and-commit algorithm, one can obtain regret of order $\sqrt{T}$.

Though in this paper we primarily investigate the problem of web-crawling, our notion of regret and the observations we make about the learning algorithms can also be applied to other control problems which model the actions of agents as Poisson processes and the policies as intensities, including alternate objectives and observation regimes for the web-crawling problem itself.
Such an approach is seen in recent works which model or predict social activities~\citep{farajtabar2018point,du2016recurrent}, control online social actions~\citep{zarezade2017steering,karimi2016smart,wang2017variational,upadhyay2018deep}, or even controlling spacing of items for optimal learning~\citep{tabibian2019enhancing}.
All such problems admit online versions where the parameters for the models (\eg, difficulty of items from recall events~\citep{tabibian2019enhancing}, or rates of posting of messages by other broadcasters~\citep{karimi2016smart}) need to be learned while also optimising the policy the agent follows.

In Section~\ref{sec:formulation}, we will formally describe the problem setup and formulate the objective with bandwidth constraints.
Section~\ref{sec:optimal-policy} takes a closer look at the objective function and the optimal policy to describe the properties any learning algorithm should have.
We propose an estimator for learning the parameters of Poisson process with partial observability and provide guarantees on its performance in Section~\ref{sec:simple-estimator}.
Leveraging the bound on the estimator's performance, we propose a simple explore-and-commit algorithm in Section~\ref{sec:etc} and show that it achieves $\Ocal(\sqrt{T})$ regret.
In Section~\ref{sec:experiments}, we test our algorithm using real data to justify our theoretical findings and we conclude with future research directions in Section~\ref{sec:conclusion}.

\section{Problem Formulation}\label{sec:formulation}

In this section, we consider the problem of keeping a cache of $m$ webpages up-to-date by modelling the changes to those webpages, the requests for the pages, and the bandwidth constraints placed on a standard web-crawler.
We assume that the cache is empty when all processes start at time $0$.

We model the changes to each webpage as Poisson processes with constant rates. The parameters of these \emph{change processes} are denoted by $\bxi = [ \xi_1, \dots , \xi_m ]$, where $\xi_i > 0$ denotes the rate of changes made to webpage $i$. We will assume that $\bxi$ are not known to us but we know only an upper bound $\xi_{\max}$ and lower bound $\xi_{\min}$ on the change rates.
The crawler will learn $\bxi$ by refreshing the pages and observing the single-bit feedback described below.
We denote the time webpage $i$ changes for the $n$th time as $x_{i,n}$.
We model the incoming requests for each webpage also as Poisson processes with constant rates and denote these rates as $\bzeta = [\zeta_1, \zeta_2, \dots, \zeta_m ]$. We will assume that these rates, which can also be interpreted as the \emph{importance} of each webpage in our cache, are known to the crawler.
We will denote the time webpage $i$ is requested for the $n$th time as $z_{i,n}$.
The change process and the request process, given their parameters, are assumed to be independent of each other.

We denote time points when page $i$ is refreshed by the crawler using $( y_{i,n} )_{n=1}^\infty$.
The feedback which the crawler gets after refreshing a webpage $i$ at time $y_{i,n}$ consists of a single bit which indicates whether the webpage has changed or not since the last observation, if any, that was made at time $y_{i, n-1}$.
Let $E^{\emptyset}_i[t_0,t]$ indicate the event that neither a change nor a refresh of the page has happened between time $t_0$ and $t$ for webpage $i$.
Define $\fresh(i, t)$ as the event that the webpage is \emph{fresh} in the cache at time $t$.
Defining the maximum of an empty set to be $-\infty$, we have:
\begin{align*}
    &\fresh(i, t) = \nonumber \\
    &\begin{cases}
            0 & \text{ if } E^{\emptyset}_i[0,t] \\
        {1}_{\left(\max \{x_{i,j}: x_{i, j} < t \} < \max \{ y_{i, j}: y_{i, j} < t \} \right)} & \text{ if } \neg E^{\emptyset}_i[0,t]
    \end{cases}
\end{align*}
where the indicator function $1_{(\bullet)}$ takes value 1 on the event in its argument and value 0 on its complement.
Hence, we can describe the feedback we receive upon refreshing a page $i$ at time $y_{i,n}$ as:
\begin{align}
    o_{i,n} = \fresh(i, y_{i, n}).\label{eqn:observation}
\end{align}
We call this a \emph{partial} observation of the change process to contrast it with \emph{full} observability of the process, \ie, when a refresh at $y_{i,n}$ provides the number of changes to the webpage in the period $(y_{i,n}, y_{i,n-1})$. For example, the crawler will have full observability of the incoming request processes.

The policy space $\Pi$ consists of all measurable functions which, at any time $t$, decide when the crawler should refresh which page in its cache based on the observations up to time $t$ that includes $\{ (o_{i,n})_{n=1}^{N} \given N=\argmax_n y_{i,n}<t; i \in [m] \}$. %

The objective of the web-crawling problem is to refresh webpages such that it maximizes the number of requests which are served a fresh version.
So the utility of a policy $\pi \in \Pi$ followed from time $t_1$ to $t_2$ can be written as:
\begin{align}\label{eq:utility}
    U( [t_1, t_2], \pi; \bxi ) = \frac{1}{m} \sum_{i=1}^m \sum_{t_1 \le z_{i,n} \le t_2} \fresh(i, z_{i,n}).
\end{align}

Our goal is to find a policy that maximizes this utility (\ref{eq:utility}).\footnote{The freshness of the webpages does depend on the policy $\pi$ which is hidden by function $\fresh$.}
However, if the class of policies is unconstrained, the utility can be maximized by a trivial policy which continuously refreshes all webpages in the cache. This is not a practical policy since it will overburden the web servers and the crawler. %
Therefore, we would like to impose a bandwidth constraint on the crawler. %
Such a constraint can take various forms and a natural way of framing it is that the expected number of webpages that are refreshed in \emph{any} time interval with width $w$ cannot exceed $w \times R$.
This constraint defines a class of stochastic policies
$\Delta_{R} = \left\{ (\rho_1, \dots , \rho_m)\in (\R^+)^m: \sum_{i=1}^m\rho_i = R \right\} \subset \Pi$, where each webpage's refresh time is drawn by the crawler from a Poi\-sson process with rate $\rho_i$.

This problem setup was studied by~\citet{Azar8099} and shown to be tractable.
Recently,~\citet{kolobov2019staying} have studied a different objective function which penalises the \emph{harmonic staleness} of pages and is similarly tractable.
Another example of such an objective (albeit with full observability) is proposed by~\citet{sia2007efficient}.
We show later that our analysis extends to their objective as well.

We define the regret of policy $\pi \in \Delta_R$ as follows
\begin{align}
R(T, \pi; \bxi) = \max_{\pi' \in \Delta_R} \E \left[U( [0, T], \pi'; \bxi )\right] - \E \left[ U( [0, T], \pi; \bxi ) \right].\nonumber
\end{align}

It is worth reiterating that the parameters $\bxi$ will not be known to the crawler. The crawler will need to determine when and which page to refresh given only the single bits of information $o_{i,n}$ corresponding to each refresh the policy makes.

\section{Learning with Poisson Processes and Partial Observability}\label{sec:optimal-policy}

In this section, we will derive an analytical form of the utility function which is amenable to analysis, describe how to uncover the optimal policy in $\Delta_R$ if all parameters (\ie, $\bxi$ and $\bzeta$) are known, and consider the problem of learning the parameters $\bxi$ with partial observability. We will use the insight gained to determine some properties a learning algorithm should have so that it can be analysed tractably.

\subsection{Utility and the Optimal policy}

Consider the expected value of the utility of a policy $\brho \in \Delta_R$ which the crawler follows from time $t_0$ till time $T$.
Assume that the cache at time $t_0$ is given by $\Sbb(t_0) = [s_1, s_2, \dots, s_m] \in \{ 0, 1 \}^m$, where $s_i = \fresh(i, t_0)$.
Then, using~\eqref{eq:utility}, we have:
\begin{align}
    &\EE [ U([t_0, T], \brho; \bxi) \mid \Sbb(t_0) ]   \nonumber \\
    &= \frac{1}{m} \sum_{i=1}^m \EE\bigg[\sum_{t_0 < z_{i,n} < T}  \fresh(i, z_{i,n})\,
    \bigg| %
    \, \fresh(i, t_0) = s_i \bigg] \nonumber \\ %
    &= \frac{1}{m} \sum_{i=1}^{m} \int_{t_0}^T \underbrace{\zeta_i \prob_{\brho}\left( \fresh(i, t)=1 \mid \fresh(i, t_0) = s_i \right)}_{F^{(i)}_{[t_0,t]}(\brho; \bxi)} \,dt \label{eqn:exp-utility-3}
\end{align}
where~\eqref{eqn:exp-utility-3} follows from Campbell's formula for Poisson Process
\citep{poissonprocess}
(expectation of a sum over the point process equals the integral over time with process'
intensity measure) as well as the fact that the request process and change/refresh processes are independent.\footnote{Note that the \emph{rates} of the processes can be correlated; only the events need to be drawn independently.}
In the next lemma, we show that the differential utility function $F^{(i)}_{[t_0, t]}(\brho; \bxi)$, defined implicitly in~\eqref{eqn:exp-utility-3}, can be made time-independent if the policy is allowed to run for \emph{long-enough}.

\begin{restatable}[Adapted from~\citep{Azar8099}]{lem}{burnin}
\label{lemma:burn-in}
For any given $\varepsilon > 0$, let $\brho \in \Delta_R$ be a policy which the crawler adopts at time $t_0$ and let the initial state of the cache be $\Sbb(t_0) = [s_1, s_2, \dots, s_m] \in \{ 0, 1 \}^m$, where $s_i = \fresh(i, t_0)$. Then if
$t - t_0 \ge \frac{1}{\xi_{\min}}\log\left(\frac{2\sum_{j=1}^{m} \zeta_i}{\varepsilon} \right)$, then $\sum_{i=1}^m \frac{\zeta_i \xi_i}{\xi_i + \rho_i} < \sum_{i=1}^{m} F^{(i)}_{[t_0, t]}(\brho; \bxi)  < \sum_{i=1}^m \frac{\zeta_i \xi_i}{\xi_i + \rho_i} + \varepsilon$.
\end{restatable}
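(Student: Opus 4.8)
The plan is to compute $\prob_{\brho}(\fresh(i,t)=1 \mid \fresh(i,t_0)=s_i)$ explicitly for a single page $i$ and show it converges to $\frac{\xi_i}{\xi_i+\rho_i}$ at an exponential rate in $t-t_0$, then sum over $i$ and multiply by $\zeta_i$. First I would fix a page $i$ and observe that, since both the change process (rate $\xi_i$) and the refresh process (rate $\rho_i$) are Poisson and independent, the page is fresh at time $t$ precisely when the most recent refresh in $[t_0,t]$ occurred strictly after the most recent change in $[t_0,t]$, with the boundary condition at $t_0$ handled by $s_i$. Conditioning on the time $\tau$ of the last refresh before $t$ (which, restricted to $(t_0,t]$, has the usual Poisson structure, and with probability $e^{-\rho_i(t-t_0)}$ there is no refresh at all in the interval), the page is fresh iff no change occurs in $(\tau,t)$ — an event of probability $e^{-\xi_i(t-\tau)}$ — or iff no refresh occurred in $(t_0,t]$ and $s_i=1$ and no change occurred in $(t_0,t)$.

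Carrying out the integral: the contribution from "there is a refresh, at the last refresh time $\tau$" is
\begin{align*}
\int_{t_0}^{t} \rho_i e^{-\rho_i(t-\tau)} e^{-\xi_i(t-\tau)} \, d\tau
= \frac{\rho_i}{\xi_i+\rho_i}\left(1 - e^{-(\xi_i+\rho_i)(t-t_0)}\right),
\end{align*}
and the contribution from "no refresh in $(t_0,t]$" is $s_i\, e^{-\rho_i(t-t_0)} e^{-\xi_i(t-t_0)} = s_i\, e^{-(\xi_i+\rho_i)(t-t_0)}$. Hmm — this gives $\prob_{\brho}(\fresh(i,t)=1\mid s_i)=\frac{\rho_i}{\xi_i+\rho_i} + \left(s_i-\frac{\rho_i}{\xi_i+\rho_i}\right)e^{-(\xi_i+\rho_i)(t-t_0)}$, whose limit is $\frac{\rho_i}{\xi_i+\rho_i}$, not $\frac{\xi_i}{\xi_i+\rho_i}$; so I would re-derive "fresh" carefully against the definition in the text (the indicator compares $\max\{x_{i,j}<t\}$ with $\max\{y_{i,j}<t\}$, and one must be careful about whether the \emph{current} refresh $y_{i,n}$ is included — this is exactly the kind of off-by-one that flips $\rho_i \leftrightarrow \xi_i$, and I expect the correct bookkeeping to yield the $\frac{\xi_i}{\xi_i+\rho_i}$ stated in the lemma). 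Once the single-page probability has the form $\frac{\xi_i}{\xi_i+\rho_i} + c_i\, e^{-(\xi_i+\rho_i)(t-t_0)}$ with $|c_i|\le 1$, multiplying by $\zeta_i$ and summing gives
\begin{align*}
\left| \sum_{i=1}^m F^{(i)}_{[t_0,t]}(\brho;\bxi) - \sum_{i=1}^m \frac{\zeta_i\xi_i}{\xi_i+\rho_i} \right|
\le \sum_{i=1}^m \zeta_i\, e^{-(\xi_i+\rho_i)(t-t_0)}
\le \left(\sum_{i=1}^m \zeta_i\right) e^{-\xi_{\min}(t-t_0)}.
\end{align*}
Setting $t-t_0 \ge \frac{1}{\xi_{\min}}\log\!\big(\tfrac{2\sum_j \zeta_j}{\varepsilon}\big)$ makes the right-hand side at most $\varepsilon/2 < \varepsilon$, and since each $c_i$ is bounded in $[-1, 0]$ or so, the two-sided bound $\sum_i \frac{\zeta_i\xi_i}{\xi_i+\rho_i} < \sum_i F^{(i)}_{[t_0,t]} < \sum_i \frac{\zeta_i\xi_i}{\xi_i+\rho_i}+\varepsilon$ follows once the sign of the error term is pinned down (the lower bound needs $c_i \ge 0$, i.e. the transient always approaches the limit from below when $s_i$ is on the appropriate side; the worst case $s_i=0$ should give exactly this).

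The main obstacle is purely the careful conditioning argument for a single page: getting the event "$\fresh(i,t)=1$" expressed correctly in terms of the last change and last refresh before $t$, handling the empty-interval boundary case $E_i^\emptyset[0,t]$ consistently with the $t_0$ initial condition, and tracking which endpoint is open or closed so that the stationary value comes out as $\frac{\xi_i}{\xi_i+\rho_i}$ rather than its complement. Everything after that — the geometric-series-style bound, the union over pages, and plugging in the stated threshold for $t-t_0$ — is routine. Since the lemma is attributed as "adapted from \citep{Azar8099}", I would also check that their stationarity computation matches this convention and cite it for the single-page limit, then only supply the finite-time ($\varepsilon$-approximation) refinement in detail.
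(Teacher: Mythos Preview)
Your approach is essentially the same as the paper's: compute the single-page freshness probability explicitly, obtain an exponentially decaying transient of the form $c_i\,e^{-(\xi_i+\rho_i)(t-t_0)}$, bound $\xi_i+\rho_i \ge \xi_{\min}$, sum against $\zeta_i$, and plug in the threshold on $t-t_0$. The only cosmetic difference is that the paper conditions on the event $E_i^\emptyset[t_0,t]$ (``no change and no refresh in $[t_0,t]$'') and then uses the superposition property of independent Poisson processes---on $\neg E_i^\emptyset[t_0,t]$ the last event is a refresh with probability $\rho_i/(\rho_i+\xi_i)$---whereas you condition on the time of the last refresh and integrate; both routes give exactly the same formula $\frac{\rho_i}{\rho_i+\xi_i} + \bigl(s_i - \frac{\rho_i}{\rho_i+\xi_i}\bigr)e^{-(\xi_i+\rho_i)(t-t_0)}$.

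Your instinct about the $\rho_i \leftrightarrow \xi_i$ swap is correct, and you should not try to ``fix'' your derivation to get $\xi_i$ in the numerator: the lemma \emph{statement} contains a typo. The paper's own proof arrives at the stationary value $\frac{\zeta_i\rho_i}{\rho_i+\xi_i}$ (consistent with the utility $F(\brho;\bxi)=\sum_i \frac{\rho_i\zeta_i}{\rho_i+\xi_i}$ defined immediately afterward), and then simply restates the bound with $\xi_i$ in the numerator without comment. So your computation is right and the confusion is not on your end. Similarly, the paper's displayed transient coefficient $\bigl(\frac{\zeta_i\rho_i}{\rho_i+\xi_i} + \zeta_i s_i\bigr)$ is a sign slip; the correct coefficient is $\bigl(\zeta_i s_i - \frac{\zeta_i\rho_i}{\rho_i+\xi_i}\bigr)$, which is what you obtained and what makes the strict lower bound in the lemma hold only approximately (or only when $s_i=1$) rather than unconditionally. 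Your bound $|c_i|\le 1$ and the resulting $\varepsilon/2$ conclusion are the right way to finish; the factor of $2$ in the logarithm is there precisely to absorb $|c_i|\le 2$ in the paper's (erroneous) version, so either constant works.
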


Hence, as long as condition described by Lemma~\ref{lemma:burn-in} holds, the differential utility function for a policy $\brho \in \Delta_R$ is time independent and can be written as just $F(\brho; \bxi) = \sum_{i=1}^m \frac{\zeta_i \xi_i}{\xi_i + \rho_i}$. Substituting this into~\eqref{eqn:exp-utility-3}, we get:
\begin{align}
   \EE[ U([t_0, T], \brho; \bxi) ]  &\approx \frac{1}{m} \sum_{i=1}^{m} \int_{t_0}^T \frac{\rho_i}{\rho_i + \xi_i} \zeta_i\,dt \nonumber \\
   &= \frac{T - t_0}{m} \sum_{i = 1}^{m} \frac{ \rho_i \zeta_i }{\rho_i + \xi_i} \label{eqn:utility-approx}
\end{align}
This leads to the following time-horizon independent optimisation problem for the optimal policy:
\begin{align}
    \maximize_{\bm{\rho}\in \Delta_R} & F(\brho; \bxi) = \sum_{i = 1}^{m} \frac{ \rho_i \zeta_i }{\rho_i + \xi_i}
    \label{eqn:azar-optimal}
\end{align}
\citet{Azar8099} have considered the approximate utility function given by~\eqref{eqn:azar-optimal} to derive the optimal refresh rates $\bsrho$ for known $\bxi$ in $\Ocal(m \log{m})$ time (See Algorithm 2 in~\citep{Azar8099}).\footnote{The optimal policy can be obtained in $\Ocal(m)$ time by using the method proposed by~\citet{duchi2008efficient}.}

This approximation has bearing upon the kind of learning algorithms we could use while keeping the analysis of the algorithm and computation of the optimal policy tractable.
The learning algorithm we employ must follow a policy $\brho \in \Delta_R$ for a certain amount of \emph{burn-in} time before we can use~\eqref{eqn:utility-approx} to approximate the performance of the policy. If the learning algorithm changes the policy \emph{too quickly}, then we may see large deviations between the actual utility and the approximation.
However, if the learning algorithm changes the policy \emph{slowly}, where Lemma~\ref{lemma:burn-in} can serve as a guide to the appropriate period, then we can use~\eqref{eqn:utility-approx} to easily calculate its performance between $t_0$ and $T$.
Similar conditions can be formulated for the approximate objectives proposed by~\citet{kolobov2019staying} and~\citet{sia2007efficient} as well.

Now that we know how to uncover the optimal policy when $\bxi$ are known, we turn our attention to the task of learning it with partial observations.

\subsection{Learnability with Partial Observations}

In this section, we address the problem of partial information of Poisson process and investigate under what condition the rate of the Poisson process can be estimated.
In our setting, for an arbitrary webpage, we only observe binary outcomes $(o_{n})_{n=1}^{\infty}$, defined by~\eqref{eqn:observation}.
The refresh times $( y_{n} )_{n=1}^\infty$ and the Poisson process of changes with rate $\xi$ induce a distribution over $\{0,1\}^{\N}$ which is denoted by $\mu_{\xi}$.
If the observations happen at regular time intervals, i.e. $y_{n}-y_{n-1} = c$ for some constant $c$, then the support $\Scal_{\xi}$ of $\mu_{\xi}$ is:
\begin{align*}
    \Scal_{\xi} = \left\{ (o_{n})_{n=1}^{\infty} \in \{0,1 \}^{\N}: \lim_{n\rightarrow\infty} \frac{\sum_{j=1}^{n} o_{j}}{n} = 1-e^{-c\, \xi} \right\}.
\end{align*}
This means that we can have a consistent estimator, based on the strong law of large numbers, if the crawler refreshes the cache at fixed intervals.

However, we can characterise the necessary property of the set of partial observations which allows parameter estimation of Poisson processes under the general class of policies $\Pi$. This result may be of independent interest.

\begin{restatable}[]{lem}{identifiability}
\label{lemma:identifiablity}
Let $\{ y_0 := 0 \} \cup (y_n)_{n=1}^\infty$ be a sequence of times, such that $\forall n.\, y_n > 0$, at which observations $(o_n)_{n=1}^\infty \in \{0, 1\}^{\NN}$ are made of a Poisson process with rate $\xi$, such that $o_n := 1$ iff there was an event of the process in $(y_{n - 1}, y_{n}]$, define $w_n = y_{n} - y_{n - 1}$, $I=\{n: w_n<1\}$ and $J = \{n: w_n\geq 1\}$. Then:
\begin{enumerate}
    \item
    If $\sum_{n \in I} w_n < \infty$ and $\sum_{n \in J} e^{-\xi w_{n}} < \infty$, then %
    any statistic for estimating $\xi$ has non-vanishing bias.
    \item
    If $\sum_{n \in I} w_n = \infty$, then there exist disjoint subsets $I_1, I_2, \dots$ of $I$ such that $\left(\sum_{n \in I_k}w_{n}\right)_{k=1}^{\infty}$ is monotone
    and
    $\sum_{n \in I_k} w_n \in (1,2)$
    for $k=1,2,\dots$ For any such sequence $\mathcal{I} = (I_k)_{k=1}^\infty$,
    the mapping
    $c_{\mathcal I}(\xi) = \lim_{K \to \infty} \frac{1}{K}\sum_{k=1}^K \exp\left({-\xi \sum_{n \in I_k}w_{n} }\right)$ is strictly monotone and
    \begin{align}
       \left[\frac{1}{K}\sum_{k=1}^K \mathbb{I}\left(\sum_{n \in I_k} o_{n}\geq 1\right) \right] \overset{a.s.}{\longrightarrow}
        1-c_{\mathcal I}(\xi)
        .
        \nonumber
    \end{align}

    \item
    If $\sum_{n \in J} e^{-w_{n} \xi} = \infty$ then, there exists a sequence $\mathcal{J} = \left(J_k\right)_{k=1}^\infty$ of disjoint subsets of $J$ such that $\left(\sum_{n \in J_k} e^{-w_{n} \xi}\right)_{k=1}^\infty$ is monotone and $\sum_{n \in J_k} e^{-w_{n} \xi} \in \left[1/e,{2/e}\right)$ for $k=1,2,\dots$
    For any such $\mathcal J$, the mapping $c_{\mathcal J}(\xi) = \lim_{K \to \infty}\left[ \frac{1}{K} \sum_{k=1}^K\prod_{n \in J_k} \left(1-e^{-\xi w_{n} }\right)\right]$ is strictly monotone and
    \begin{align}
       \lim_{K \to \infty} \left[\frac{1}{K} \sum_{k=1}^K\mathbb{I}\left(o_{n} {\geq} 1,\; \forall n \in J_k\right) \right] \overset{a.s.}{\longrightarrow}
       c_{\mathcal J}(\xi),\nonumber
    \end{align}
\end{enumerate}
\end{restatable}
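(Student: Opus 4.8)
The whole argument rests on one structural fact: since a Poisson process has independent increments and the sampling intervals $(y_{n-1},y_n]$ are pairwise disjoint, the bits $o_n$ are independent with $\prob_\xi(o_n=1)=1-e^{-\xi w_n}$, and for any family of pairwise disjoint index sets the corresponding sub-collections of bits are mutually independent. I would use this to reduce each of the three regimes to a statement about an explicit product measure.

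For part~1 the plan is to show that the law $\mu_\xi$ of $(o_n)_n$ is essentially deterministic. Let $\bar o$ be the sequence that is $0$ on $I$ and $1$ on $J$. From $\sum_{n\in I}(1-e^{-\xi w_n})\le \xi\sum_{n\in I}w_n<\infty$ and $\sum_{n\in J}e^{-\xi w_n}<\infty$, Borel--Cantelli gives $\mu_\xi(o_n=\bar o_n \text{ for all large } n)=1$, and the same two summability conditions make the infinite product $\mu_\xi(\{\bar o\})=\prod_{n\in I}e^{-\xi w_n}\prod_{n\in J}(1-e^{-\xi w_n})$ strictly positive; the second condition also survives replacing $\xi$ by any larger rate, so $\mu_{\xi'}(\{\bar o\})>0$ as well. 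Hence $\|\mu_\xi-\mu_{\xi'}\|_{\mathrm{TV}}\le 1-\min\{\mu_\xi(\{\bar o\}),\mu_{\xi'}(\{\bar o\})\}<1$, and I would finish with a two-point (Le Cam) argument: two admissible rates whose laws cannot be told apart with probability one force any estimator to carry bias bounded away from $0$ at one of them, with the bound depending only on $\xi_{\min},\xi_{\max}$ and this common atom mass.

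For part~2, note $I$ must be infinite. I would enumerate $I$, greedily accumulate consecutive widths, and close a block whenever the running sum first exceeds $1$; since $w_n<1$ each block-width $\sigma_k$ lands in $(1,2)$, and since $\sum_{n\in I}w_n=\infty$ there are infinitely many blocks. By the monotone-subsequence lemma I pass to a sub-collection along which $\sigma_k$ is monotone, hence $\sigma_k\to\sigma_\infty\in[1,2]$. The block events $B_k=\mathbb{I}(\sum_{n\in I_k}o_n\ge 1)$ are independent with $\E B_k=1-e^{-\xi\sigma_k}$ (the complement needs every bit in the block to vanish, probability $e^{-\xi\sum_{n\in I_k}w_n}$) and variances at most $\tfrac14$, so Kolmogorov's SLLN gives $\frac1K\sum_{k\le K}(B_k-\E B_k)\to 0$ a.s., while the Cesàro mean $\frac1K\sum_{k\le K}e^{-\xi\sigma_k}\to e^{-\xi\sigma_\infty}=:c_{\mathcal I}(\xi)$; strict monotonicity is immediate from $\frac{d}{d\xi}e^{-\xi\sigma_k}=-\sigma_k e^{-\xi\sigma_k}\le -e^{-2\xi}<0$ uniformly in $k$. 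Part~3 is the mirror image for $J$: carve a subset of $J$ into blocks with $\sum_{n\in J_k}e^{-\xi w_n}\in[1/e,2/e)$, pass to a subsequence along which both $\sum_{n\in J_k}e^{-\xi w_n}$ and $p_k:=\prod_{n\in J_k}(1-e^{-\xi w_n})$ are monotone, note that $B_k=\prod_{n\in J_k}o_n$ is $\mathrm{Bernoulli}(p_k)$ and independent across $k$, and conclude $\frac1K\sum_{k\le K}B_k\to\lim_k p_k=:c_{\mathcal J}(\xi)$ a.s.\ by Kolmogorov's SLLN; strict monotonicity follows from $\frac{d}{d\xi}\log p_k=\sum_{n\in J_k}\frac{w_n e^{-\xi w_n}}{1-e^{-\xi w_n}}\ge\sum_{n\in J_k}e^{-\xi w_n}\ge 1/e$ (using $w_n\ge 1$), uniformly in $k$, together with the uniform lower bound on $p_k$ that the same block window provides.

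The main obstacle is the bookkeeping in the block constructions of parts~2 and~3 rather than any deep probability: one must verify that blocks hitting the prescribed sum window can always be carved out, that the relevant per-block statistic ($\sigma_k$, resp.\ $p_k$) can \emph{simultaneously} be arranged to be monotone so the Cesàro limit exists, and — in part~3 — handle by a case split the short large-gaps (those with $\xi w_n$ small, for which $e^{-\xi w_n}$ is too large to fit even a single index into the window $[1/e,2/e)$). For part~1 the only delicate point is pinning down precisely what ``non-vanishing bias'' should mean and choosing the cleanest form of the two-point lower bound.
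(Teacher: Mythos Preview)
Your plan matches the paper's proof: for Part~1 both show that the deterministic sequence $\bar o$ (zeros on $I$, ones on $J$) is a common positive-probability atom of $\mu_\xi$ and of $\mu_{\xi'}$ for a nearby rate, and for Parts~2--3 both greedily carve blocks of controlled weight, extract a monotone subsequence, and apply an SLLN to the resulting independent block indicators. The only differences are packaging --- the paper lower-bounds $\prod_{n\in J}(1-e^{-\xi w_n})$ via the explicit convexity inequality $1-x\ge e^{-\alpha x}$ on $[0,e^{-\xi}]$ with $\alpha=-e^{\xi}\log(1-e^{-\xi})$ rather than the standard infinite-product criterion you invoke, compares with $\xi'<\xi$ rather than your (cleaner) $\xi'>\xi$, and concludes Part~1 by the bare remark that the two observation sequences coincide with positive probability rather than through a Le~Cam/TV-distance bound.
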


Note that it is possible that, for some $\xi$, the statistics almost surely converge to a value that is unique to $\xi$, but for some other one they do not. Indeed, when $w_n = \ln n$, then $\sum_{n \in I} w_n < \infty$ and $\sum_{n \in J} e^{-\xi w_{n}} < \infty$ for $\xi = 2$, but $\sum_{n \in J} e^{-\xi w_{n}} = \infty$ for $\xi = 1$.
More concretely, assuming that the respective limits exist, we have:
\begin{align}
  \liminf_{n \in J}{\frac{w_n}{\ln{n}}} > \frac{1}{\xi} &\implies \sum_{n} e^{-\xi w_n} < \infty  \quad \text{ and }\nonumber \\
  \quad
  \limsup_{n \in J}{\frac{w_n}{\ln{n}}} \le \frac{1}{\xi} &\implies  \sum_{n} e^{-\xi w_n} = \infty.
  \nonumber
\end{align}
In particular, if $\limsup_{n \in J} \frac{w_n}{\ln{n}} = 0$, it implies that
$\sum_{n \in J} e^{-\xi w_n} = \infty$ for all $\xi > 0$, which, in turn, implies that it will be possible to learn the true value for any parameter $\xi > 0$.

Lemma~\ref{lemma:identifiablity} has important implications on the learning algorithms we can use to learn $\bxi$.
It suggests that if the learning algorithm decreases the refresh rate $\rho_i$ for a webpage too quickly, such that $\prob\left(\lim\inf_{n \to \infty} \frac{w_{i,n}}{\ln n} > \frac{1}{\xi_i} \right) > 0$ (assuming the limit exists), then the estimate of each parameter $\xi_i$ has non-vanishing error.

In summary, in this section, we have made two important observations about the learning algorithm we can employ to solve the web-crawling problem. Firstly, given an error tolerance of $\varepsilon > 0$, the learning algorithm should change the policy only after $\sim \frac{1}{\xi_{\min}}\log{\left( \nicefrac{2\sum_i \zeta_i}{\varepsilon} \right)}$ steps to allow for time invariant differential utility approximations to be valid.
Secondly, in order to obtain consistent estimates for $\bxi$ from partial observations, the learning algorithm should not change the policy so drastically that it violates the conditions in Lemma~\ref{lemma:identifiablity}.
These observations strongly suggest that to obtain theoretical guarantees on the regret, one should use \emph{phased} learning algorithms where each phase of the algorithm is of duration $\sim \frac{1}{\xi_{\min}}\log{(\nicefrac{2\sum_i \zeta_i}{\varepsilon})}$, the policy is only changed when moving from one phase to the other, and the changes made to the policy are such that constraints presented in Lemma~\ref{lemma:identifiablity} are not violated.
Parallels can be drawn between such learning algorithms and the algorithms used for online learning of Markov Decision Processes which rely on bounds on mixing times~\citep{neu2010online}.
In Section~\ref{sec:etc}, we present the simplest of such algorithms, \ie, the explore-and-commit algorithm, for the problem and provide theoretical guarantees on the regret. Additionally, in Section~\ref{sec:eps}, we also empirically compare the performance of ETC to the phased $\varepsilon$-greedy learning algorithm.

In the next section, we investigate practical estimators for the parameters $\bhxi$ and the formal guarantees they provide for the web-crawling problem.

\section{Parameter Estimation and Sensitivity Analysis with Partial Observations}\label{sec:simple-estimator}

In this section, we address the problem of parameter estimation of Poisson processes under partial observability and investigate the relationship between the utility of the optimal policy $\bsrho$ (obtained using true parameters) and policy $\bhrho$ (obtained using the estimates).

Assume the same setup as for Lemma~\ref{lemma:identifiablity}, \ie, we are given a finite sequence of observation times $\{ y_0 := 0 \}\, \cup\, (y_{n})_{n=1}^{N}$ in advance, and we observe $(o_{n})_{n=1}^{N}$, defined as in~\eqref{eqn:observation}, based on a Poisson process with rate $\xi$. Define $w_n = y_n - y_{n - 1}$. Then log-likelihood of $(o_{n})_{n=1}^{N}$ is:
\begin{align}
\cL (\xi ) %
&= \sum_{n: o_n = 1} \ln ( 1- e^{-\xi w_n } ) - \sum_{n: o_n = 0} \xi w_{n}\label{eqn:ll-est}
\end{align}
which is a concave function. Taking the derivative and solving for $\xi$ yields the maximum likelihood estimator~\citep{cho2003estimating}.
However, as the MLE estimator lacks a closed form, coming up with a non-asymptotic confidence interval is a very challenging task.
Hence, we consider a simpler estimator.

Let us define an intermediate statistic $\hp$ as the fraction of times we observed that the underlying Poisson process produced no events,
$\hp = \frac{1}{N} \sum_{n=1}^{N} (1-o_n)$. Since $\prob(o_n = 0) = e^{-\xi w_n}$ we get
$\EE[\hp] = \frac{1}{N} \sum_{n=1}^{N} e^{-{\xi} w_n}$.
Motivated by this, we can estimate $\xi$ by the following moment matching method: choose $\txi$ to be the unique solution of the equation
\begin{align}
 \hp = \frac{1}{N} \sum_{n=1}^{N} e^{-\txi w_n}
 \label{eqn:xihat-definition},
\end{align}
and then obtain estimator $\hxi$ of $\xi$ by clipping $\txi$ to range $[\xi_{\min}, \xi_{\max}]$,
$\hxi = \max\{\xi_{\min}, \min\{\xi_{\max}, \txi\}\}$.
The RHS in~\eqref{eqn:xihat-definition} is monotonically decreasing in $\txi$, therefore finding the solution of~\eqref{eqn:xihat-definition} with error $\gamma$ can be done in $O ( \log (1/\gamma))$ time based on binary search.
Additionally, if the intervals are of fixed size, \ie, $\forall n.\, w_n = c$, then $\txi$ reduces to the maximum likelihood estimator.
Such an estimator was proposed by~\citet{cho2003estimating} and was shown to have good empirical performance. Here, instead of smoothing the estimator, a subsequent clipping of $\txi$ resolves the issue of its instability for the extreme values of $\hp=0$ and $\hp=1$ (when the solution to \eqref{eqn:xihat-definition} becomes $\txi =\infty$ and $\txi =0$, respectively).
In the following lemma, we will show that this estimator $\hxi$ is also amenable to non-asymptotic analysis by providing a high probability confidence interval for it.

\begin{restatable}[]{lem}{concentration}
\label{lemma:concentration}
Under the condition of Lemma~\ref{lemma:identifiablity}, for any $\delta \in (0, 1)$, and $N$ observations it holds that
\begin{align*}
\prob \left( | \hxi - \xi | \ge \left(\frac{1}{N}\sum_{n=1}^N w_n e^{-\xi_{\max} w_n} \right)^{-1}
\sqrt{\frac{\log \frac{2}{\delta}}{2N}} \right)
\le \delta
\end{align*}
where $\hxi = \max\{\xi_{\min}, \min\{\xi_{\max}, \txi\}\}$ and
$\txi$ is obtained by solving~\eqref{eqn:xihat-definition}.
\end{restatable}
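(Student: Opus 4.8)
The plan is to control the fluctuations of the auxiliary statistic $\hp$ with a standard concentration inequality for sums of independent bounded variables, and then transfer this to a bound on $\hxi$ by inverting the strictly monotone, smooth map that defines $\txi$, checking along the way that the clipping to $[\xi_{\min},\xi_{\max}]$ can only shrink the error. Throughout set $g(\lambda) := \tfrac1N\sum_{n=1}^N e^{-\lambda w_n}$, so that $\txi$ is the solution of $g(\txi)=\hp$ (with the conventions $\txi=0$ when $\hp=1$ and $\txi=+\infty$ when $\hp=0$), and $\hxi$ is $\txi$ clipped to $[\xi_{\min},\xi_{\max}]$.

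\textbf{Step 1 (concentration of $\hp$).} Under the setup of Lemma~\ref{lemma:identifiablity} the observation times $y_0=0<y_1<\dots<y_N$ are fixed in advance, and since a Poisson process has independent increments over the disjoint intervals $(y_{n-1},y_n]$, the bits $o_1,\dots,o_N$ are independent with $1-o_n\sim\mathrm{Bernoulli}(e^{-\xi w_n})$. Hence $\hp=\tfrac1N\sum_{n=1}^N(1-o_n)$ is an average of $N$ independent $[0,1]$-valued variables with mean $g(\xi)$, and Hoeffding's inequality yields $\prob\!\left(|\hp-g(\xi)|\ge t\right)\le 2e^{-2Nt^2}$ for every $t>0$.

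\textbf{Step 2 (from $\hp$ to $\hxi$, then pick $t$).} The map $g$ is strictly decreasing with $|g'(\lambda)|=\tfrac1N\sum_{n=1}^N w_n e^{-\lambda w_n}$, and each summand $w_n e^{-\lambda w_n}$ is nonincreasing in $\lambda\ge 0$, so $|g'(\lambda)|\ge C:=\tfrac1N\sum_{n=1}^N w_n e^{-\xi_{\max}w_n}>0$ for all $\lambda\in[0,\xi_{\max}]$. I claim the event $\{|\hp-g(\xi)|\le t\}$ is contained in $\{|\hxi-\xi|\le t/C\}$. Since $\xi\in[\xi_{\min},\xi_{\max}]$, clipping leaves $\xi$ unchanged, so $\hxi$ and $\xi$ both lie in $[\xi_{\min},\xi_{\max}]$ and $|g'|\ge C$ on the interval between them. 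If clipping is inactive then $g(\hxi)=\hp$, and the mean value theorem gives $C\,|\hxi-\xi|\le|g(\hxi)-g(\xi)|=|\hp-g(\xi)|\le t$. If clipping is active on the upper side ($\txi\ge\xi_{\max}$, $\hxi=\xi_{\max}$), then $\hp=g(\txi)\le g(\xi_{\max})\le g(\xi)$ — also valid when $\hp=0$ — so $0\le g(\xi)-g(\xi_{\max})\le g(\xi)-\hp\le t$, and the mean value theorem gives $C\,|\hxi-\xi|=C\,(\xi_{\max}-\xi)\le g(\xi)-g(\xi_{\max})\le t$; the lower-side case $\txi\le\xi_{\min}$ is symmetric. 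Thus in all cases $|\hxi-\xi|\le t/C$, so $\prob(|\hxi-\xi|\ge t/C)\le 2e^{-2Nt^2}$. Taking $t=\sqrt{\log(2/\delta)/(2N)}$ makes the right-hand side equal to $\delta$ and $t/C$ equal to the claimed radius $\left(\tfrac1N\sum_{n=1}^N w_n e^{-\xi_{\max}w_n}\right)^{-1}\sqrt{\log(2/\delta)/(2N)}$.

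\textbf{Main obstacle.} The only genuinely delicate point is the clipping bookkeeping in Step 2: once $\txi$ leaves $[\xi_{\min},\xi_{\max}]$ (including the degenerate cases $\hp\in\{0,1\}$) the identity $g(\hxi)=\hp$ fails, and one must instead compare $g$ at the relevant clip boundary and exploit monotonicity of $g$ to keep the chain of inequalities oriented correctly; one also needs that $|g'|$ is nonincreasing so that the worst slope over the interval joining $\hxi$ and $\xi$ is exactly $|g'(\xi_{\max})|=C$. Everything else — the independence of the $o_n$, the Hoeffding step, and the final choice of $t$ — is routine.
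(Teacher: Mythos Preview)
Your proof is correct and follows essentially the same approach as the paper: apply Hoeffding to the empirical frequency $\hp$, then transfer the deviation to $\hxi$ via the monotone map $g$ while checking that clipping can only help. The only cosmetic difference is that the paper uses the convexity (tangent-line) inequality $e^{-\xi w_n}\ge e^{-\hxi w_n}-w_n e^{-\hxi w_n}(\xi-\hxi)$ together with $e^{-\hxi w_n}\ge e^{-\txi w_n}$ to handle clipping in one stroke, whereas you invoke the mean value theorem and an explicit case split on whether the clip is active; both routes reduce to the same lower bound $|g'|\ge C$ on $[\xi_{\min},\xi_{\max}]$.
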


Similar analysis can be done for the setting when one has full observability~\citep[See Appendix K]{upadhyay2019learning}.
With the following lemma we bound the sensitivity of the expected utility to the accuracy of our parameter estimates $\bhxi$. %
\begin{restatable}[]{lem}{sensitivity}
\label{lemma:sensitivity}
For the expected utility $F(\brho; \bxi)$ defined in~\eqref{eqn:azar-optimal}, let $\bsrho = \argmax_{\brho} F(\brho; \bxi)$, $\bhrho = \argmax_{\brho} F(\brho; \bhxi)$ and define the suboptimality of $\bhrho$ as $\err(\bhrho) := F(\bsrho; \bxi) - F(\bhrho; \bxi)$. Then $\err(\bhrho) $ can be bounded by:
\begin{align*}
\err(\bhrho) \leq  \sum_i\frac{1}{\hxi_i \min\{\hxi_i, \xi_i\}} \zeta_i (\hxi_i - \xi_i)^2.
\end{align*}
\end{restatable}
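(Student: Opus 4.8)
The plan is to telescope $\err(\bhrho)$ through the ``swapped'' objective $F(\cdot;\bhxi)$, keeping the middle term as a \emph{negative quadratic} rather than discarding it, and then to complete the square coordinatewise.

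Write $\err(\bhrho) = [F(\bsrho;\bxi) - F(\bsrho;\bhxi)] + [F(\bsrho;\bhxi) - F(\bhrho;\bhxi)] + [F(\bhrho;\bhxi) - F(\bhrho;\bxi)]$. From the definition of $F$ in~\eqref{eqn:azar-optimal} one gets the exact identity $F(\brho;\bxi) - F(\brho;\bhxi) = \sum_i \zeta_i (\hxi_i - \xi_i)\, h_i(\rho_i)$ with $h_i(\rho) := \frac{\rho}{(\rho+\xi_i)(\rho+\hxi_i)}$, so the first and last brackets sum to $\sum_i \zeta_i (\hxi_i - \xi_i)[h_i(\srho_i) - h_i(\hrho_i)]$, which is only linear in $\bhxi - \bxi$ by itself. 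The middle bracket I would bound by a Taylor expansion of the concave separable function $F(\cdot;\bhxi)$ along the segment $[\bhrho,\bsrho]$: the first-order term is nonpositive because $\bhrho$ maximises $F(\cdot;\bhxi)$ over $\Delta_R \ni \bsrho$, and $\partial_{\rho_i}^2 \big(\tfrac{\rho_i\zeta_i}{\rho_i+\hxi_i}\big) = -\tfrac{2\zeta_i\hxi_i}{(\rho_i+\hxi_i)^3}$, giving $F(\bsrho;\bhxi) - F(\bhrho;\bhxi) \le -\sum_i \frac{\zeta_i\hxi_i (\srho_i-\hrho_i)^2}{(\srho_i\vee\hrho_i + \hxi_i)^3}$.

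Combining, $\err(\bhrho) \le \sum_i\bigl(\zeta_i(\hxi_i-\xi_i)[h_i(\srho_i)-h_i(\hrho_i)] - \tfrac{\zeta_i\hxi_i(\srho_i-\hrho_i)^2}{(\srho_i\vee\hrho_i+\hxi_i)^3}\bigr)$. Using the mean value theorem, $h_i(\srho_i)-h_i(\hrho_i) = h_i'(\bar\rho_i)(\srho_i-\hrho_i)$ for some $\bar\rho_i$ between $\srho_i$ and $\hrho_i$, and the elementary inequality $ab - \beta b^2 \le \tfrac{a^2}{4\beta}$ (valid for reals $a,b$ and $\beta>0$) applied with $b = \srho_i - \hrho_i$ removes the optimiser displacement altogether: the $i$-th summand is at most $\frac{\zeta_i(\hxi_i-\xi_i)^2\, h_i'(\bar\rho_i)^2\,(\srho_i\vee\hrho_i+\hxi_i)^3}{4\hxi_i}$. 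It then remains to verify $h_i'(\bar\rho_i)^2(\srho_i\vee\hrho_i+\hxi_i)^3 \le \tfrac{4}{\min\{\hxi_i,\xi_i\}}$; here I would use $|h_i'(\rho)| = \frac{|\xi_i\hxi_i-\rho^2|}{[(\rho+\xi_i)(\rho+\hxi_i)]^2} \le \frac{1}{\rho^2+\xi_i\hxi_i}$ (so $h_i$ is flattest exactly where $\rho$ is largest) together with a bound on $\srho_i\vee\hrho_i$.

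The step I expect to be the main obstacle is this last one. Since the water-filling optimum is $\srho_i = (\sqrt{\zeta_i\xi_i/\nu} - \xi_i)_+$, it is not a Lipschitz function of $\xi_i$ near $\xi_i = 0$, and the multiplier $\nu$ is coupled to all coordinates through $\sum_j\srho_j = R$, so one cannot reason about $\srho_i$ versus $\hrho_i$ in isolation. I would handle it by a case split: when $\max\{\xi_i,\hxi_i\} \ge 2\min\{\xi_i,\hxi_i\}$ the crude bound $|h_i(\srho_i)-h_i(\hrho_i)| \le \max_\rho h_i(\rho) = (\sqrt{\xi_i}+\sqrt{\hxi_i})^{-2}$ already suffices because $|\hxi_i-\xi_i| \ge \min\{\xi_i,\hxi_i\}$ there; in the remaining regime the KKT equations must be used to show that $\srho_i\vee\hrho_i$ stays within a constant factor of $\srho_i\wedge\hrho_i + \sqrt{\xi_i\hxi_i}$, which is precisely where the argument has to engage with the joint rather than coordinatewise structure of the optimum, e.g.\ through monotonicity of the water level in $R$ or a comparison that avoids $\nu$.
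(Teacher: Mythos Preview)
Your coordinatewise ``complete the square'' stalls at exactly the place you flag, but the obstruction is more serious than a missing KKT argument: the pointwise inequality you need is \emph{false}. Writing $p=\srho_i$, $q=\hrho_i$, $x=\xi_i$, $y=\hxi_i$, your bound after $ab-\beta b^2\le a^2/(4\beta)$ requires
\[
\Big(\tfrac{h_i(p)-h_i(q)}{p-q}\Big)^{2}\,(p\vee q + y)^3 \;\le\; \tfrac{4}{\min\{x,y\}},
\]
but with $p=0.5$, $q=200$, $x=1$, $y=1.5$ the left side is about $5.4$ while the right side is $4$. This configuration is not pathological: it arises whenever the water levels $\nu$ and $\hat\nu$ differ substantially because of large $|\hxi_j-\xi_j|$ at \emph{other} coordinates $j\neq i$, so $\srho_i$ and $\hrho_i$ can be arbitrarily far apart even though $\xi_i$ and $\hxi_i$ are within a factor of~$2$. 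Your proposed remedy---showing via KKT that $\srho_i\vee\hrho_i$ is comparable to $\srho_i\wedge\hrho_i+\sqrt{\xi_i\hxi_i}$---is therefore simply not true in general; the coupling through $\sum_j\rho_j=R$ cannot be localised to coordinate $i$. The loss happens earlier, when you replace the exact second-order remainder $-\tfrac{\zeta_i\hxi_i(p-q)^2}{(p+\hxi_i)(q+\hxi_i)^2}$ by the Taylor worst case $-\tfrac{\zeta_i\hxi_i(p-q)^2}{(p\vee q+\hxi_i)^3}$, which is too weak when $p$ and $q$ are far apart.

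The paper sidesteps all of this by never attempting a termwise bound. It proves a \emph{lower} bound $\err(\bhrho)\ge\sum_i \frac{\zeta_i\xi_i(\srho_i-\hrho_i)^2}{(\xi_i+\srho_i)^2(\xi_i+\hrho_i)}$ from first-order optimality of $\bsrho$ for $F(\cdot;\bxi)$ (note: at $\bsrho$, not $\bhrho$, and computed exactly rather than via a Hessian bound), and an upper bound $\err(\bhrho)\le\sum_i \frac{\zeta_i|\hxi_i-\xi_i||\srho_i-\hrho_i|}{(\xi_i+\srho_i)(\hxi_i+\hrho_i)}$ from your same three-term decomposition with the middle bracket dropped. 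It then factors each summand of the upper bound as $\sqrt{A_i}\sqrt{B_i}$ with $B_i$ matching the lower-bound summands, applies Cauchy--Schwarz \emph{across} $i$, and substitutes the lower bound to get $\err\le\sqrt{\textstyle\sum_i A_i}\,\sqrt{\err}$, hence $\err\le\sum_i A_i$. The remaining step $A_i=\frac{\zeta_i(\xi_i+\hrho_i)(\hxi_i-\xi_i)^2}{\xi_i(\hxi_i+\hrho_i)^2}\le\frac{\zeta_i(\hxi_i-\xi_i)^2}{\hxi_i\min\{\hxi_i,\xi_i\}}$ is a one-variable inequality in $\hrho_i$ alone, with no reference to $\srho_i$; that is what makes the argument close.
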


This lemma gives us hope that if we can learn $\bhxi$ well enough such that $| \hxi_i - \xi_i | \sim \Ocal\left( \nicefrac{1}{\sqrt{T}} \right)$ for all $i$, then we can obtain sub-linear regret by following the policy $\bhrho = \argmax_{\brho} F(\brho; \bhxi)$.
This indeed is possible and, in the next section, we show that an explore-and-commit algorithm can yield $\Ocal(\sqrt{T})$ regret.
We would like to also bring to the notice of the reader that a similar result up to constants can be shown for the Harmonic staleness objective proposed by~\citet{kolobov2019staying} (see~\citep[Appendix E]{upadhyay2019learning}) and the accumulating delay objective by~\citet{sia2007efficient} (see~\citep[Appendix F]{upadhyay2019learning}).

\section{Explore-Then-Commit Algorithm}\label{sec:etc}

In this section, we will analyse a version of the explore-and-commit (ETC) algorithm for solving the web-crawling problem.
The algorithm will first learn $\bxi$ by sampling all pages till time $\tau$ and then commit to the policy of observing the pages from time $\tau$ till $T$ at the rates $\bhrho$ as given by the Algorithm 2 in~\citep{Azar8099}, obtained by passing it the estimated rates $\bhxi$ instead of the true rates $\bxi$.

\xhdr{Revisiting the policy space} The constraint we had used to define the policy space $\Delta_R$ was that given any interval of width $w$, the expected number of refreshes in that interval should not exceed $wR$, which limited us to the class of Poisson policies. However, an alternative way to impose the constraint is to bound the time-averaged number of requests made per unit of time asymptotically.
It can be shown that given our modelling assumptions that request and change processes are memory-less, the policy which maximizes the utility in~\eqref{eq:utility} given a fixed number of observations per page will space them equally.
This motivates a policy class $\Kcal_R = \{ \bkappa = (\kappa_1, \dots, \kappa_m) : \sum_{i=1}^m \nicefrac{1}{\kappa_i} = R \} \subset \Pi$ as the set of deterministic policies which refresh webpage $i$ at regular time intervals of length $\kappa_i$.
Policies from $\Kcal_R$ allow us to obtain tight confidence intervals for $\bhxi$ by a straight-forward application of Lemma~\ref{lemma:concentration}.
However, the sensitivity of the utility function for this policy space to the quality of the estimated parameters is difficult to bound tightly.
In particular, the differential utility function for this class of policies (defined in~\eqref{eqn:exp-utility-3}) is not strongly concave, which is a basic building block of Lemma~\ref{lemma:sensitivity}. This precludes performance bounds which are quadratic in the error of estimates $\bhxi$, which lead to worse bounds on the regret of the ETC algorithm.
These reasons are further expounded in the extended version of the paper~\citep[see Appendix I]{upadhyay2019learning}.
Nevertheless, we can show that using the uniform-intervals policy $\bkappa^{\ui}$ incurs \emph{lower} regret than the uniform-rates policy $\brho^{\ur}$, while still making on average $R$ requests per unit time~\citep[see Appendix H]{upadhyay2019learning}.

Hence, to arrive at regret bounds, we will perform the exploration using \emph{Uniform-interval exploration} policy $\bkappa^{\ui} \in \Kcal_R$ which refreshes webpages at regular intervals $\forall i.\, \kappa_i = \frac{m}{R}$, which will allow us to use Lemma~\ref{lemma:concentration} to bound the error of the estimated $\bhxi$ with high probability.
\begin{restatable}[]{lem}{uiparameterconcentration}
\label{lemma:uiparameterconcentration}
For a given $\delta \in (0, 1)$, after following the uniform-interval policy $\bkappa^{\ui}$ for time
$\tau$, which is assumed to be a multiplicity of $\nicefrac{m}{R}$,
we can claim the following for the error in the estimates $\bhxi$ produced using the estimator proposed in Lemma~\ref{lemma:concentration}:
    \begin{align}
    \prob \left( \forall i \in [m] \colon
    | \hxi_i - \xi_i | \le e^{\frac{\xi_{\max} m}{R}} \sqrt{\frac{R\log{\frac{2m}{\delta}}}
    {2\tau m}}
    \right) \ge 1- \delta.
    \nonumber
    \end{align}
\end{restatable}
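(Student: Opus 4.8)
The plan is to obtain this as a direct corollary of Lemma~\ref{lemma:concentration}, specialized to equally spaced observations, followed by a union bound over pages. First I would note that under the uniform-interval policy $\bkappa^{\ui}$ every page $i$ is refreshed at equally spaced times with gap $\kappa_i = m/R$, so the inter-observation widths satisfy $w_{i,n} = m/R$ for all $n$. Since $\tau$ is assumed to be a multiple of $m/R$, page $i$ accumulates exactly $N = \tau R/m$ complete observations by time $\tau$ (no leftover partial interval), and the hypothesis of Lemma~\ref{lemma:concentration} — a prescribed finite sequence of observation times — is met, so the lemma applies to each $\hxi_i$ separately.

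Next I would evaluate the generic confidence width of Lemma~\ref{lemma:concentration} at the constant widths $w_{i,n}\equiv m/R$. The normalizing average collapses to $\frac{1}{N}\sum_{n=1}^N w_{i,n}e^{-\xi_{\max}w_{i,n}} = \frac{m}{R}e^{-\xi_{\max}m/R}$, whose reciprocal is $\frac{R}{m}e^{\xi_{\max}m/R}$. Substituting $N = \tau R/m$ into $\sqrt{\log(2/\delta')/(2N)} = \sqrt{m\log(2/\delta')/(2\tau R)}$ and multiplying, the factors of $R$ and $m$ recombine so that for each fixed $i$,
\begin{align*}
\prob\!\left(|\hxi_i-\xi_i| \ge e^{\xi_{\max}m/R}\sqrt{\tfrac{R\log(2/\delta')}{2\tau m}}\right)\le \delta'.
\end{align*}

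Finally I would set $\delta' = \delta/m$ and take a union bound over $i\in[m]$: this replaces $\log(2/\delta')$ by $\log(2m/\delta)$ and bounds the total failure probability by $\delta$, yielding exactly the claimed inequality. (The union bound requires no independence across pages, though the change processes are in fact independent given their rates, so the $\hxi_i$ depend on disjoint data.)

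Honestly there is no deep obstacle here — the statement is essentially a reparametrization of Lemma~\ref{lemma:concentration}. The only points demanding care are bookkeeping: checking that ``$\tau$ a multiple of $m/R$'' gives precisely $N=\tau R/m$ clean observations per page, and tracking the algebra so the $R$ and $m$ factors cancel down to $\sqrt{R/(2\tau m)}$ rather than leaving stray powers. If one wanted to be fully rigorous about the interplay with the approximations of Section~\ref{sec:optimal-policy}, one would also remark that the concentration statement itself is exact for the observation model and does not invoke the burn-in approximation of Lemma~\ref{lemma:burn-in}.
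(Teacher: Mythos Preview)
Your proposal is correct and follows essentially the same route as the paper: specialize Lemma~\ref{lemma:concentration} to the constant widths $w_n=m/R$ and $N=\tau R/m$, simplify the resulting radius, and finish with a union bound at level $\delta/m$. The paper's proof is identical in substance, only compressing the algebra slightly by plugging $\delta/m$ directly rather than introducing an intermediate $\delta'$.
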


With these lemmas, we can bound the regret suffered by the ETC algorithm using the following Theorem.
\begin{restatable}[]{thm}{roottregret}
\label{thm:root-t-regret}
Let $\pi^{\ETC}$ denote the explore-and-commit algorithm which explores using the uniform-interval exploration policy for time $\tau$ (assumed to be a multiplicity of $\frac{m}{R}$), estimates $\bhxi$ using the estimator proposed in~\eqref{eqn:xihat-definition}, and then uses the policy $\bhrho = \argmax_{\brho \in \Delta_R} F(\brho; \bhxi)$ till time $T$. Then for a given $\delta \in (0, 1)$, with probability $1 - \delta$, the expected regret of the explore and commit policy $\pi^{\ETC}$ is bounded by:
\begin{align}
     R(T, \pi^{\ETC}; \bxi)
         &\le  \left( \frac{\tau}{m} +
         \frac{(T - \tau)}{\tau} \frac{e^{2\frac{\xi_{\max} m}{R}} R \log{(\nicefrac{2m}{\delta})}}{2m^2 \xi_{\min}^2} \right) \sum_{i = 1}^m \zeta_i.
         \nonumber
\end{align}

Further, we can choose an exploration horizon $\tau^\star \sim \Ocal(\sqrt{T})$ such that, with probability $1 - \delta$, the expected regret is $\Ocal(\sqrt{T})$.
\end{restatable}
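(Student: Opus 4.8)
The plan is to split the horizon $[0,T]$ at the commitment time $\tau$ and, using the additivity of $U$ over disjoint time intervals, to bound $R(T,\pi^{\ETC};\bxi)$ by the sum of a term for the exploration window $[0,\tau]$ and a term for the commitment window $[\tau,T]$, and then to optimise over $\tau$. On $[0,\tau]$ the crawler plays $\bkappa^{\ui}$ and collects nonnegative utility, so that window contributes at most $\max_{\pi'\in\Delta_R}\E[U([0,\tau],\pi';\bxi)]$; since each request receives a freshness value in $\{0,1\}$ and, by Campbell's formula, page $i$ is requested $\zeta_i\tau$ times in expectation, this is at most $\tfrac{\tau}{m}\sum_{i}\zeta_i$ — the first term of the claimed bound, obtained without any burn-in argument.

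For the commitment window, condition on the event of \Cref{lemma:uiparameterconcentration}, which has probability at least $1-\delta$ and gives $|\hxi_i-\xi_i|\le \Delta:=e^{\xi_{\max}m/R}\sqrt{R\log(2m/\delta)/(2\tau m)}$ for all $i$. Applying \Cref{lemma:burn-in} with a tolerance $\varepsilon\asymp 1/T$ (its burn-in length $\tfrac{1}{\xi_{\min}}\log(2\sum_i\zeta_i/\varepsilon)=O(\log T)$), the instantaneous expected utilities of the committed policy $\bhrho\in\Delta_R$ and of the $\Delta_R$-optimal benchmark $\bsrho$ over $[\tau,T]$ each equal $\tfrac1m F(\cdot;\bxi)$ up to an additive $o(\sqrt T)$ error, so this window contributes at most $\tfrac{T-\tau}{m}\,\err(\bhrho)+o(\sqrt T)$ with $\err(\bhrho)=F(\bsrho;\bxi)-F(\bhrho;\bxi)$. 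Feeding $|\hxi_i-\xi_i|\le\Delta$ into \Cref{lemma:sensitivity} and using $\hxi_i,\xi_i\ge\xi_{\min}$, hence $\hxi_i\min\{\hxi_i,\xi_i\}\ge\xi_{\min}^2$, gives
\begin{align*}
\err(\bhrho)\ \le\ \sum_i\frac{\zeta_i(\hxi_i-\xi_i)^2}{\hxi_i\min\{\hxi_i,\xi_i\}}\ \le\ \frac{\Delta^2}{\xi_{\min}^2}\sum_i\zeta_i\ =\ \frac{e^{2\xi_{\max}m/R}\,R\log(2m/\delta)}{2\tau m\,\xi_{\min}^2}\sum_i\zeta_i ,
\end{align*}
and multiplying by $\tfrac{T-\tau}{m}$ recovers exactly the second term; summing the two windows proves the displayed inequality.

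To obtain the $\Ocal(\sqrt T)$ rate, note the bound has the form $g(\tau)=\tfrac{A}{m}\tau+\tfrac{(T-\tau)B}{\tau}$ with $A=\sum_i\zeta_i$ and $B=\tfrac{e^{2\xi_{\max}m/R}R\log(2m/\delta)}{2m^2\xi_{\min}^2}\sum_i\zeta_i$ independent of $T$; minimising over $\tau$ gives $\tau^\star=\sqrt{mBT/A}=\Theta(\sqrt T)$ and $g(\tau^\star)=2\sqrt{ABT/m}-B=\Ocal(\sqrt T)$, and rounding $\tau^\star$ up to the nearest multiple of $m/R$ costs only an additive $O(1)$. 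The algebra above is routine once the earlier lemmas are available; the delicate parts are (i) keeping the \Cref{lemma:burn-in} slack $o(\sqrt T)$ after accumulating it over all of $[\tau,T]$, which forces $\varepsilon$ to be of order $1/T$ (equivalently, each phase must be long enough) and is what relates this argument to phased learning; (ii) the mismatch that exploration uses a $\Kcal_R$ policy whereas the benchmark and the committed policy lie in $\Delta_R$, which is harmless since only the crude per-request bound is used on $[0,\tau]$ and $\Kcal_R$ respects the same bandwidth budget; and (iii) turning the $1-\delta$ statement into a bound on $\E[R(T,\pi^{\ETC};\bxi)]$, which additionally pays for the failure event through $\err(\bhrho)\le\sum_i\zeta_i$ and is absorbed by taking $\delta\asymp 1/\sqrt T$.
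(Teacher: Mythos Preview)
Your proof is correct and follows essentially the same route as the paper: split the regret at $\tau$, bound the exploration window by the trivial upper bound $\tfrac{\tau}{m}\sum_i\zeta_i$, bound the commitment window via \Cref{lemma:sensitivity} combined with the concentration event of \Cref{lemma:uiparameterconcentration} and the crude inequality $\hxi_i\min\{\hxi_i,\xi_i\}\ge\xi_{\min}^2$, and then optimise the resulting $A\tau+BT/\tau$ form over $\tau$. Your treatment is in fact slightly more careful than the paper's, which silently uses the approximation~\eqref{eqn:utility-approx} without tracking the burn-in slack from \Cref{lemma:burn-in}; your points (i)--(iii) make explicit exactly the informalities the paper leaves implicit.
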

\begin{proof}
Since the utility of any policy is non-negative, we can upper-bound the regret of the algorithm in the exploration phase by the expected utility of the best stationary policy $\bsrho = \argmax_{\brho \in \Delta_R} F(\brho; \bxi)$, which is $\frac{\tau}{m} \sum_{i=1}^m  \zeta_i \frac{\srho_i}{\srho_i + \xi_i} < \frac{\tau}{m} \sum_{i=1}^m \zeta_i$. In the exploitation
phase, the regret is given by $\frac{T-\tau}{m} (F(\brho^*,\bxi) - F(\bhrho, \bxi))$ (see \eqref{eqn:utility-approx}), which we bound using Lemma \ref{lemma:sensitivity}.
Hence,
we see that (with a slight abuse of notation to allow us to write $R(T, \bkappa^{\ui}; \bxi)$ for $\bkappa^{\ui} \in \Kcal_R$):
\begin{align}
    R(T, \pi^{\ETC}&; \bxi) = R(\tau, \bkappa^{\ui}; \bxi) + R(T - \tau, \bhrho; \bxi) \nonumber\\
    &\le \frac{\tau}{m} \sum_{i = 1}^m \zeta_i + \frac{(T - \tau)}{m} \sum_{i = 1}^m \frac{\zeta_i (\hxi_i - \xi_i)^2}{\hxi_i \min\{\hxi_i, \xi_i\}} \label{eqn:regret-decomposition-1}
\end{align}

As we are using the estimator from Lemma~\ref{lemma:concentration}, we have $\hxi_i \min\{\hxi_i, \xi_i\} \ge \xi^2_{\min}$. Using this and Lemma~\ref{lemma:uiparameterconcentration} with~\eqref{eqn:regret-decomposition-1}, we get
with probability $1-\delta$:
\begin{align}
 R(T, &\pi^{\ETC}; \bxi)  \nonumber \\
 &\le \tau  \overbrace{\frac{1}{m}\sum_{i = 1}^m \zeta_i}^A + (T - \tau)\frac{\sum_{i = 1}^m \zeta_i }{m \xi_{\min}^2}(\hxi_i - \xi_i)^2 \nonumber\\
 &= A \tau  + (T - \tau) \frac{\sum_{i = 1}^m \zeta_i}{m \xi_{\min}^2} \left( e^{\frac{\xi_{\max} m}{R}} \sqrt{\frac{R \log{(\nicefrac{2m}{\delta})}}{2m\tau}} \right)^2 \nonumber\\
 &= A \tau  + \frac{(T - \tau)}{\tau} \underbrace{\frac{\sum_{i = 1}^m \zeta_i}{2m^2 \xi_{\min}^2} e^{2\frac{\xi_{\max} m}{R}} R \log{(\nicefrac{2m}{\delta})}}_{B} \nonumber \\
 &= A \tau + \frac{B\,T}{\tau} - B \label{eqn:simplified-regret-bound}
\end{align}
This proves the first claim.

The bound in~\eqref{eqn:simplified-regret-bound} takes the minimum value when $\tau^\star = \sqrt{\frac{B}{A}} \sqrt{T} = \sqrt{ \frac{e^{2\frac{\xi_{\max} m}{R}} R \log{(\nicefrac{2m}{\delta})}}{2m \xi_{\min}^2}} \sqrt{T}$, giving with probability $1 - \delta$, the worst-case regret bound of:
\begin{align*}
 R(T, \brho^{\ETC}; \bxi) &< 2 \sqrt{ABT}
\end{align*}

This proves the second part of the theorem.
\end{proof}

This theorem bounds the expected regret conditioned on the event that the crawler learns $\bhxi$ such that $\forall i.\, |\hxi_i - \xi_i| < \sqrt{\frac{\log{\nicefrac{2m}{\delta}}}{\nicefrac{2\tau R}{m}}}$.
These kinds of guarantees have been seen in recent works~\citep{rosenski2016multi,avner2014concurrent}.

Note that the proof of the regret bounds can be easily generalised to the full-observation setting~\citep[Appendix K]{upadhyay2019learning} and for other objective functions~\citep[see Appendix E and F]{upadhyay2019learning}.

Finally, note that using the doubling trick the regret bound can be made horizon independent at no extra cost.
The policy can be de-randomized to either yield a fixed interval policy in $\Kcal_R$ or, to a carousel like policy with similar performance guarantees~\cite[See Algorithm 3]{Azar8099}.
With this upper-bound on the regret of the ETC algorithm, in the next section we explore the empirical performance of the strategy.

\section{Experimental Evaluation}\label{sec:experiments}

We start with the analysis of the ETC algorithm, which shows empirically that the bounds that we have proven in Theorem~\ref{thm:root-t-regret} are tight up to constants.
Next, we compare the ETC algorithm with phased $\varepsilon$-greedy algorithm and show that phased strategies can out-perform a well-tuned ETC algorithm, if given sufficient number of phases to learn. We leave the detailed analysis of this class of algorithms for later work.
An empirical evaluation of the MLE estimator and the moment matching estimator for partial observations, and the associated confidence intervals proposed in Lemma~\ref{lemma:concentration} is done in the extended version of the paper~\citep[Appendix J]{upadhyay2019learning}. These show that, for a variety of different parameters, the performance of the MLE estimator and the moment matching estimator is close to each other.

\subsection{Evaluation of ETC Algorithm}

\begin{figure}[t]
  \centering
  \subfloat[Regret / $\tau$ (with $T=10^4$)]{\includegraphics[width=0.23\textwidth]{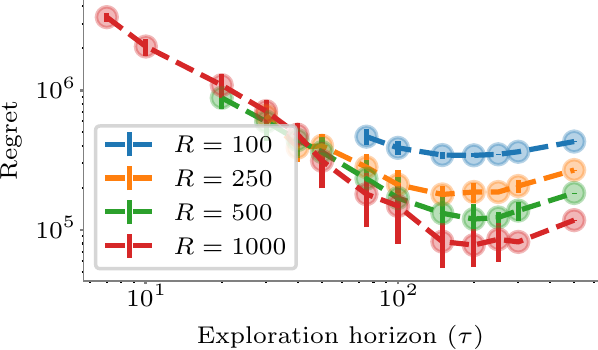}\label{fig:regret-tau}}\hspace{0.01\textwidth}%
  \subfloat[$\tau^\star$ / $T$]{\includegraphics[width=0.23\textwidth]{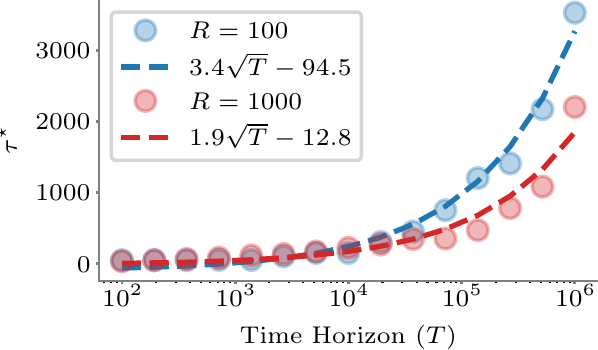}\label{fig:tau-T}}\hspace{0.01\textwidth}%
  \subfloat[Normalized Regret / $T$]{\includegraphics[width=0.23\textwidth]{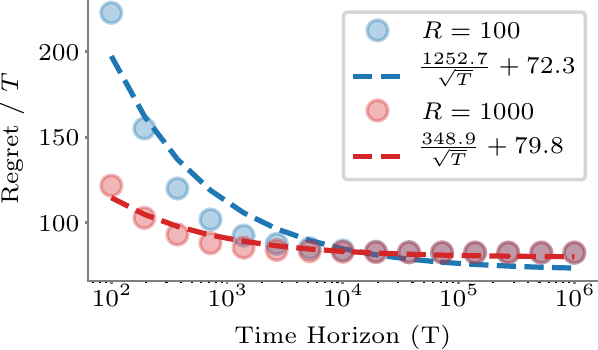}\label{fig:regret-T}}\hspace{0.01\textwidth}%
  \caption{Performance of the ETC algorithm. Panel (a) shows the regret suffered with different exploration horizons (keeping $T=10^4$ fixed) showing that a minima exists. Panel (b) shows that the optimal value of the horizon scales as $\Ocal(\sqrt{T})$ while panel (c) shows that the time-horizon normalized regret of the ETC algorithm decreases as $\Ocal(\nicefrac{1}{\sqrt{T}})$.}
\end{figure}

For the experimental setup, we make user of the MSMACRO dataset~\citep{kolobov2019optimal}.
The dataset was collected over a period of 14 weeks by the production crawler for Bing.
The crawler visited webpages approximately once per day. The crawl time and a binary indicator of whether the webpage had changed since the last crawl or not are included in the dataset along with an importance score for the various URLs.
We sample 5000 webpages from the dataset while taking care to exclude pages which did not change at all or which changed upon every crawl so as to not constraint the estimates of the change rates artificially to $\xi_{\min}$ or $\xi_{\max}$.
We calculate their rate of change ($\bxi$) using the MLE estimator~\eqref{eqn:ll-est}.
The corresponding importance values $\bzeta$ are also sampled from the importance value employed by the production web-crawler.
We set $\xi_{\min} = 10^{-9}$ and $\xi_{\max} = 25$.
The experiments simulate the change times $\left( (x_{i,n})_{n=1}^\infty \right)_{i \in [m]}$ for webpages %
50 times with different random seeds and report quantities with standard deviation error bars, unless otherwise stated.

We first empirically determine the regret for different exploration horizon $\tau$ and bandwidth parameter $R$.
To this end, we run a grid search for different values of $\tau$ (starting from the minimum time required to sample each webpage at least once), simulate the exploration phase using uniform-interval policy $\bkappa^{\ui}$ and simulated change times to determine the parameters $\bhxi$ and the regret suffered during the exploration phase.
We calculate $\bhrho$ using Algorithm 2 of~\citet{Azar8099}, similarly calculate $\bsrho$ using the true parameters $\bxi$, calculate their respective utility after the commit phase from $\tau$ till time horizon $T = 10^4$ using~\eqref{eqn:utility-approx}, and use it to determine the total regret suffered.
We report the mean regret with the error bars indicating the standard deviations in Figure~\ref{fig:regret-tau}.
We see that there indeed is an optimal exploration horizon, as expected, and the value of both the horizon and the regret accumulated depends on $R$. We explore the relationship between the optimal exploration horizon $\tau^\star$ and the time horizon $T$ next by varying $T$ from $10^2$ to $10^6$ and calculating the optimal horizon $\tau^\star$ (using ternary search to minimize the empirical mean of the utility) for $R \in \{10^2, 10^3\}$; plots for other values of $R$ are qualitatively similar.
Figure~\ref{fig:tau-T} shows that the optimal exploration horizon $\tau^\star$ scales as $\Ocal(\sqrt{T})$.

Finally, we plot the time-horizon normalized regret suffered by $\pi^{\ETC}$ when using the optimal exploration horizon $\tau^\star$ in Figure~\ref{fig:regret-T}.
We see that the normalized regret decreases as $\frac{1}{\sqrt{T}}$, as postulated by~\cref{thm:root-t-regret}.
Plots for different values of $\bxi$ and $\bzeta$ are qualitatively similar. %
It can also be seen in all plots that if the allocated bandwidth $R$ is high, then the regret suffered is lower but the dependence of the optimal exploration threshold $\tau^\star$ on the available bandwidth is non-trivial: in Figure~\ref{fig:tau-T}, we see that the $\tau^\star_{R=100} < \tau^\star_{R = 1000}$ if $T < 10^3$ and $\tau^\star_{R=100} > \tau^\star_{R = 1000}$ if $T > 10^4$.

It is noteworthy that Figures~\ref{fig:tau-T} and~\ref{fig:regret-T} suggest that the bounds we have proven in Theorem~\ref{thm:root-t-regret} are tight up to constants.

\subsection{Phased $\varepsilon$-greedy Algorithm}\label{sec:eps}

In Section~\ref{sec:etc}, we have shown guarantees on the performance of explore-then-commit algorithm which is the simplest form of policies which adheres to the properties we mention in Section~\ref{sec:optimal-policy}.
However, it is not difficult to imagine other strategies which conform to the same recommendations.
For example, consider a phased $\varepsilon$-greedy algorithm which runs with $\brho^{\ur}$ for duration given in~\cref{lemma:burn-in}, estimates $\bhxi$, calculates $\bhrho$, and then follows the policy $\brho^\varepsilon$, where $\rho^\varepsilon_i = (1 - \varepsilon) \hrho_i + \varepsilon \frac{R}{m}$, and then starts another phase, improving its policy with improving estimates of $\bhxi$.
Since $\forall i\in[m].\, \rho^\varepsilon_i > \varepsilon\frac{R}{m}$, the policy will continue exploring all the webpages, ensuring eventual consistency in the estimates of $\bhxi$.

We performed simulations with the $\brho^\varepsilon$ algorithm and found that though it performed worse than ETC for small time horizons (Figures~\ref{fig:3-phases} and~\ref{fig:6-phases}), it performed better when given sufficient number of phases (see Figure~\ref{fig:9-phases}).
Exploring the regret bounds of such policies is part of our future work.

\begin{figure}[t]
  \centering
  \subfloat[$T = 10^{3.67}$ / 3 phases]{\includegraphics[width=0.23\textwidth]{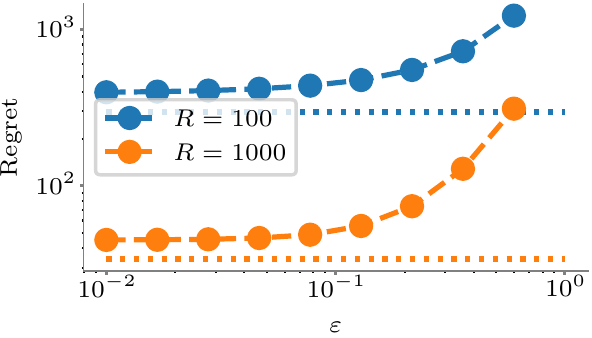}\label{fig:3-phases}}\hspace{0.01\textwidth}%
  \subfloat[$T = 10^{3.83}$ / 6 phases]{\includegraphics[width=0.23\textwidth]{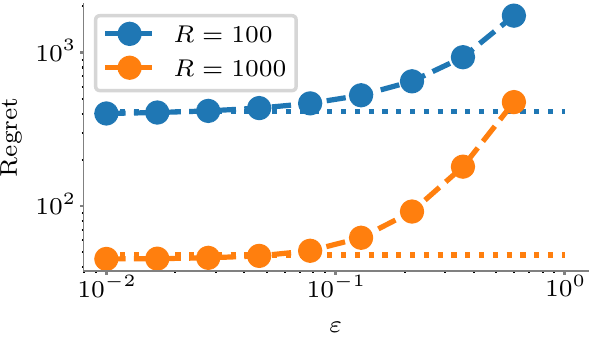}\label{fig:6-phases}}\hspace{0.01\textwidth}%
  \subfloat[$T = 10^4$ / 9 phases]{\includegraphics[width=0.23\textwidth]{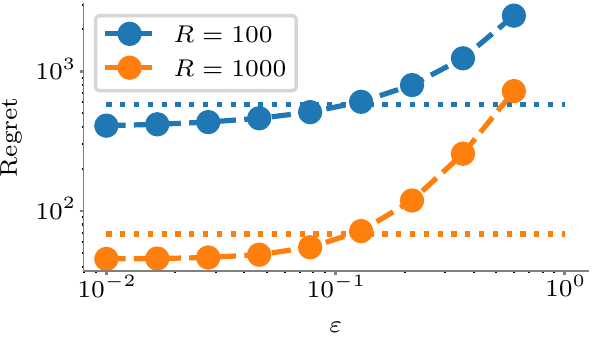}\label{fig:9-phases}}\hspace{0.01\textwidth}%
  \caption{Performance of the phased $\varepsilon$ greedy algorithm. The dotted lines show the regret of the ETC algorithm, with optimal exploration horizon, same bandwidth $R$ and time horizon $T$. While the ETC algorithm performs well when the number of phases is small, with increasing number of phases, the phased $\varepsilon$-greedy algorithm is able to obtain lower regret than ETC.}
\end{figure}

\section{Conclusion}\label{sec:conclusion}

In this paper, we have taken the first step towards solving the problem of learning changing rates of web-pages while solving the web-crawling problem, while also providing a guiding framework for analysing online learning problems where the agent's policy can be described using a Poisson process.
We have shown that the learning algorithms should be \emph{phased} and there are restrictions on how much they can change the policy from step to step while keeping learning feasible.
Further, by bounding the performance of a Poisson policy using a deterministic policy, we have proved that a simple explore-and-commit policy has $\Ocal(\sqrt{T})$ regret under mild assumptions about the parameters.

This leaves several interesting avenues of future work open.
Though we have proved a theoretical upper bound on regret of $\Ocal(\sqrt{T})$ and have empirically seen that bound is tight up to constants for the explore-and-commit algorithm, it is not clear whether this bound is tight for the class of all \emph{phased} strategies. We will explore the class of such strategies in a planned extension.
Lastly, we believe there are rich connections worth exploring between this work and the recent work on the Recharging Bandits paradigm~\citep{immorlica2018recharging}.

\section*{Acknowledgements}
W. Kot{\l}owski was supported by the Polish National Science Centre under grant No. 2016/22/E/ST6/00299.

\bibliographystyle{aaai}
\bibliography{biblio}

\newpage{}

\begin{appendix}

\section{Proof of~\cref{lemma:burn-in}}
\burnin*
\begin{proof}
Let $E_i^\emptyset[t_0,t]$ denote an event that neither a change nor a refresh has happened
for webpage $i$ in time interval $[t_0,t]$. Note that under event $E_i^\emptyset[t_0,t]$, we have $\fresh(i, t) = s_i$. Otherwise (\ie, under event $\neg E_i^\emptyset[t_0,t]$), as we have assumed that the change and the refresh processes are independent Poisson processes for all webpages, the probability that the last event which happened for webpage $i$ between $t_0$ and $t$ was an update event is $\frac{\rho_i}{\rho_i + \xi_i}$.
Hence, we can write the differential utility function as:
\begin{align}
&F^{(i)}_{[t_0, t]}(\brho; \bxi) =     \zeta_i \prob_{\brho}(\fresh(i, t)) \nonumber \\
&= \frac{\zeta_i \rho_i}{\rho_i + \xi_i} \prob(\neg E_i^\emptyset[t_0,t] ) +  \zeta_i s_i \prob(E_i^\emptyset[t_0,t] ) \nonumber \\
    &= \frac{\zeta_i \rho_i}{\rho_i + \xi_i} \left( 1 - e^{-(\rho_i + \xi_i)(t - t_0)} \right) + \zeta_i s_i e^{-(\rho_i + \xi_i)(t - t_0)} \nonumber \\
    &= \frac{\zeta_i \rho_i}{\rho_i + \xi_i} + \left( \frac{\zeta_i \rho_i}{\rho_i + \xi_i} + \zeta_i s_i \right)e^{-(\rho_i + \xi_i)(t - t_0)} \nonumber \\
    &\implies \sum_{i=1}^m F^{(i)}_{[t_0, t]}(\brho; \bxi) \nonumber \\
    &= \sum_{i=1}^m \frac{\zeta_i \rho_i}{\rho_i + \xi_i} + \sum_{i=1}^m \left( \frac{\zeta_i \rho_i}{\rho_i + \xi_i} + \zeta_i s_i  \right)e^{-(\rho_i + \xi_i)(t - t_0)}\label{eqn:fresh-prob-exact-expanded}
\end{align}
This proves the first part of the inequality that $\sum_{i=1}^m \frac{\zeta_i \xi_i}{\xi_i + \rho_i} < \sum_{i=1}^{m} F^{(i)}_{[t_0, t]}(\brho; \bxi)$.

Now substituting $t - t_0 = \frac{1}{\xi_{\min}}\log\left(\frac{2\sum_{j=1}^{m} \zeta_{j}}{\varepsilon} \right)$ into~\eqref{eqn:fresh-prob-exact-expanded}:
\begin{align*}
&\sum_{i=1}^m F^{(i)}_{[t_0, t]}(\brho; \bxi) = \nonumber \\
&\qquad\sum_{i=1}^m \frac{\zeta_i \rho_i}{\rho_i + \xi_i} \\
&\qquad \qquad + \sum_{i=1}^m \left( \frac{\zeta_i \rho_i}{\rho_i + \xi_i}  + \zeta_i s_i\right)e^{-(\rho_i + \xi_i)\frac{1}{\xi_{\min}}\log\left(\frac{2 \sum_{j=1}^{m} \zeta_{j}}{\varepsilon} \right)} \\ %
&\le \sum_{i=1}^m \frac{\zeta_i \rho_i}{\rho_i + \xi_i} + \sum_{i=1}^m \zeta_i \left( \frac{\rho_i}{\rho_i + \xi_i} + s_i \right)\left(\frac{\varepsilon}{2 \sum_{j=1}^{m} \zeta_{j}} \right) \\ %
&\le \sum_{i=1}^m \frac{\zeta_i \rho_i}{\rho_i + \xi_i} + \varepsilon %
\end{align*}
where we have used $\frac{\rho_i + \xi_i}{\xi_{\min}} \ge 1$
in the first inequality and $\frac{\rho_i}{\rho_i + \xi_i} + s_i \le 2$
in the second inequality.
\end{proof}

\section{Proof of~\cref{lemma:concentration}}
\concentration*
\begin{proof}
Recall that $\hp = \frac{1}{N} \sum_{n=1}^N (1-o_n)$ is the empirical frequency of no-event counts, and denote $\EE[\hp]$ by $p$. In this notation, we have:
\[
 \hp = \frac{1}{N} \sum_{n=1}^{N} e^{-\txi w_n},
 \qquad \text{and} \quad
 p = \frac{1}{N} \sum_{n=1}^{N} e^{-\xi w_n},
\]
where $\txi$ is monotonically decreasing in $\hp$ (and similarly for $\xi$ as a function of
$p$).

First, assume that $\hp \le p$, which implies $\txi \ge \xi$ by the monotonicity property mentioned above, and by the property of clipping (and the fact that
$\xi \in [\xi_{\min}, \xi_{\max}]$) we also have $\txi \ge \hxi \ge \xi$.
By the convexity of the exponential function:
\[
e^{-\xi w_n} ~\geq~ e^{-\hxi w_n} - w_n e^{-w_n \hxi} (\xi - \hxi)
~\geq~ e^{-\txi w_n} - w_n e^{-w_n \hxi} (\xi - \hxi),
\]
which implies after summing over $n$:
\begin{equation}
p - \hp ~\ge~ (\hxi - \xi) \frac{1}{N} \sum_{n=1}^{N} w_n e^{-w_n \hxi}
~\ge~ (\hxi - \xi) \frac{1}{N} \sum_{n=1}^{N} w_n e^{-w_n \xi_{\max}}.
\label{eqn:phat-upper-bound}
\end{equation}
Similarly for  $p \leq \hp$ we have $\txi \le \hxi \le \xi$ and therefore:
\[
e^{-\txi w_n} ~\geq~ e^{-\hxi w_n} ~\geq~ e^{-\xi w_n} - w_n e^{-w_n \xi} (\hxi - \xi),
\]
which implies:
\begin{align}
 \hp - p ~\geq~ (\xi - \hxi) \frac{1}{N} \sum_{n=1}^{N} w_n e^{-w_n \xi}
 ~\ge~ (\xi - \hxi) \frac{1}{N} \sum_{n=1}^{N} w_n e^{-w_n \xi_{\max}}
 \label{eqn:phat-lower-bound}
\end{align}
By combining~\eqref{eqn:phat-upper-bound} and~\eqref{eqn:phat-lower-bound},  we get:
\begin{align}
  \left| \hp - p \right|
  \geq |\xi - \hxi| \frac{1}{N} \sum_{n=1}^{n} w_n e^{-w_n \xi_{\max}}
  \label{eqn:phat-xihat-bound}
\end{align}
For $\hp$ being a frequency of counts, Hoeffding's inequality for independent Bernoulli variables implies that for $\delta \in (0, 1)$:
\[
\prob \left(\left| \hp-p \right| < \underbrace{\sqrt{\frac{\log{\nicefrac{2}{\delta}}}{2N}}}_{\varepsilon} \right) \geq 1 - \delta
\]
Hence, with probability at least $1-\delta$, we have $|\hp-p| \leq \varepsilon$ and combining it with~\eqref{eqn:phat-xihat-bound}, with probability $1-\delta$:
\[
|\xi - \hxi| \leq \left(\frac{1}{N} \sum_{n=1}^{N} w_n e^{-w_n \xi_{\max}}\right)^{-1} \varepsilon,
\]
which finishes the proof.
\end{proof}

\section{Proof of~\cref{lemma:identifiablity}}\label{sec:proof-identifiability}
\identifiability*
\begin{proof}
Let $\alpha = -e^{\xi} \log{(1 - e^{-\xi})}$ for which it holds that
\begin{equation}
1-e^{-\xi} = e^{-\alpha e^{-\xi}}.
\label{eq:definition_of_alpha}
\end{equation}
Using the convexity of the exponential function for any $x_0 \in \mathbb{R}$ and
any $\beta \in [0,1]$ we have:
\[
e^{\beta x_0} ~=~ e^{\beta x_0 + (1-\beta) \cdot 0}
~\leq~ \beta e^{x_0} + (1-\beta).
\]
Take any $x \in [0,e^{-\xi}]$ and apply the above to $x_0 = -\alpha e^{-\xi}$
and $\beta = x e^{\xi} \in [0,1]$ to get:
\begin{align}
e^{-\alpha x} &~\leq~ x e^{\xi} e^{-\alpha e^{-\xi}} + (1-xe^{\xi}) \nonumber \\
&~=~ x e^{\xi} (1 - e^{-\xi}) + (1-xe^{\xi})
~=~ 1 - x,
\label{eq: linear upper bound on exp}
\end{align}
where we used~\eqref{eq:definition_of_alpha} in the first equality.

Consider the probability $p$ that $o_{n} = 0$ for all $i \in I$ and $o_{n} = 1$ for all $i \in J$:
\begin{align*}
  p
:=&
\prod_{n \in \NN}
  \prob\left[
    o_n = \begin{cases}1 & \text{ if } n \in J\\
    0& \text{ if } n \in I
    \end{cases}
    \;
  \right] \nonumber \\
&=
  \left(\prod_{n\in I} e^{-w_n \xi}\right)
  \times
  \left(\prod_{n\in J} \left(1-e^{-w_n \xi}\right)\right)
\\
>&\,
   \exp{\left( {- \xi \sum_{n \in I} w_n} \right)}
   \times
   \exp\left(-\alpha\sum_{n \in J} e^{-w_n \xi}\right)
\enspace,
\end{align*}
where the last inequality follows by~\eqref{eq: linear upper bound on exp}.
This probability is bounded away from 0 if $\sum_{n \in I} w_i < \infty$ and $\sum_{n \in J} e^{-w_n \xi} < \infty$.

Now consider another Poisson process with rate $\xi' < \xi$ which we also observe at times $(y_n)_{n=1}^\infty$ to produce observations $(o'_n)_{n=1}^\infty$.
Then define, analogous to $p$ above:
\begin{align*}
 p' := \left(\prod_{n\in I} e^{-w_n \xi'}\right) \times \left(\prod_{n\in J} \left[1-e^{-w_n \xi'}\right]\right),
\end{align*}
which is also positive (since $\xi' < \xi$).
Thus, the two series $(o_{n})_{n=1}^\infty$ and $(o'_{n})_{n=1}^\infty$ are identical with probability at least $p'p > 0$.
This proves the first claim as there is no way to distinguish between $\xi$ and $\xi'$ when $(o_{n})_{n=1}^\infty = (o_{n}')_{n=1}^\infty$.

Regarding the second claim, the existence of the sequence $I_1, I_2, \dots$ satisfying the constraints follows from the monotone subsequence theorem
(\ie, constructing $I_1', I_2', \dots$ that satisfies all the constraints except for the one on monotonicity, the theorem guarantees the existence of a subset that also satisfies the constraint on monotonicity).
Note now that, for all $k$:
\begin{align}
\EE \left[\mathbb{I}\left(\sum_{n \in I_k} o_{n}\geq 1\right) \right]
&= 1 - \prob\left(o_n = 0, \forall n \in I_k\right) \nonumber \\
&=
1 - e^{-\xi \sum_{n \in I_k} w_n},
\end{align}
thus, due to the law of large numbers,
\[
\frac{1}{K}\sum_{k=1}^K\left[\mathbb{I}\left(\sum_{n \in I_k} o_{n}\geq 1\right) - \left( 1 - e^{-\xi \sum_{n \in I_k} w_n} \right)\right]
\overset{a.s.}{\longrightarrow} 0
.
\]
Also notice  that $\lim_{K \to \infty}\frac{1}{K}\sum_{k=1}^K\left[ \left( 1 - e^{-\xi \sum_{n \in I_k} w_n} \right)\right]$ exists due to the monotonicity constraint and the fact that $\xi$ and each $w_n$ are positive.
These together imply
\begin{align}
\frac{1}{K}\sum_{k=1}^K&\left[\mathbb{I}\left(\sum_{n \in I_k} o_{n}\geq 1\right) \right]
\overset{a.s.}{\longrightarrow} \nonumber \\
&\lim_{K \to \infty}\frac{1}{K}\sum_{k=1}^K\left[  1 - e^{-\xi \sum_{n \in I_k} w_n}\right] \nonumber \\
&=
1 - \lim_{K \to \infty}\frac{\sum_{k=1}^K e^{-\xi \sum_{n \in I_k} w_n} }{K}
.
\end{align}
Due to the constraint $\sum_{n \in I_k} w_n < 2$, \begin{equation}
\label{eq: limes bounded away from 0}
\lim_{K \to \infty}\frac{\sum_{k=1}^K e^{-\xi \sum_{n \in I_k} w_n} }{K} > 0 .
\end{equation}
Finally, the constraint $\sum_{n \in I_k} w_n > 1$ implies that, for any $ \xi > \xi' > 0$,
\[
\xi \sum_{n \in I_k} w_n - \xi' \sum_{n \in I_k} w_n
=
(\xi-\xi') \sum_{n \in I_k} w_n
>
\xi-\xi',
\]
further implying
\begin{equation}
\label{eq: monotonicity of limes}
\frac{\sum_{k=1}^K e^{-\xi' \sum_{n \in I_k} w_n} }{K}
>
e^{\xi-\xi'}\frac{\sum_{k=1}^K e^{-\xi \sum_{n \in I_k} w_n} }{K}
.
\end{equation}
Strict monotonicity, and thus also uniqueness, now follows from \eqref{eq: monotonicity of limes} and \eqref{eq: limes bounded away from 0}.

The last claim follows similarly.
\end{proof}

\section{Proof of~\cref{lemma:sensitivity}}\label{sec:proof-sensitivity}
\sensitivity*
\begin{proof}
Since $\bsrho$ minimizes $F(\brho, \bxi)$,
it follows from the first-order optimality condition that:
\begin{align*}
    \left(\nabla_{\bsrho} F(\bsrho, \bxi)\right)^\top (\bhrho - \bsrho) &\leq 0\\
     \Longrightarrow
    \sum_i \frac{\zeta_i \xi_i}{(\xi_i + \srho_i)^2} (\hrho_i - \srho_i) &\leq 0.
\end{align*}
We thus lower-bound the suboptimality by:
\begin{align}
    \err(\bhrho) &~=~ F(\bsrho; \bxi) - F(\bhrho; \bxi) \nonumber \\
    &\geq F(\bsrho; \bxi) - F(\bhrho; \bxi) + \left(\nabla_{\bsrho} F(\bsrho, \bxi)\right)^\top (\bhrho - \bsrho)  \nonumber \\
    &= \sum_i \left( \frac{\zeta_i \srho_i}{\xi_i + \srho_i}
    - \frac{\zeta_i \hrho_i}{\xi_i + \hrho_i} +
    \frac{\zeta_i \xi_i}{(\xi_i + \srho_i)^2} (\hrho_i - \srho_i) \right) \nonumber \\
    &= \sum_i \zeta_i \left( \frac{\xi_i(\srho_i- \hrho_i)}{(\xi_i + \srho_i)(\xi_i + \hrho_i)}
    - \frac{\xi_i (\srho_i - \hrho_i)}{(\xi_i + \srho_i)^2} \right) \nonumber \\
    &= \sum_i \frac{\zeta_i \xi_i(\srho_i - \hrho_i)^2}{(\xi_i + \srho_i)^2(\xi_i + \hrho_i)}.
    \label{eq:lower_bound_on_err}
\end{align}
On the other hand, because $\bhrho$ minimizes $F(\brho, \bhxi)$, we
can upper-bound the suboptimality by:
\begin{align}
 \err(\bhrho) &= F(\bsrho; \bxi) - F(\bsrho; \bhxi) + \nonumber
  \underbrace{F(\bsrho; \bhxi) - F(\bhrho; \bhxi)}_{\leq 0} \nonumber \\
  &\quad + F(\bhrho; \bhxi) - F(\bhrho; \bxi) \nonumber \\
  &\leq F(\bsrho; \bxi) - F(\bsrho; \bhxi) ~+~
  F(\bhrho; \bhxi) - F(\bhrho; \bxi).
    \label{eq:upper_bound_on_err}
\end{align}
We explicitly calculate both differences on the right hand side:
\begin{align*}
F(\bsrho; \bxi) - F(\bsrho; \bhxi)
&= \sum_i \zeta_i \left( \frac{\srho_i}{\xi_i + \srho_i} - \frac{\srho_i}{\hxi_i + \srho_i} \right) \nonumber \\
&= \sum_i  \frac{\zeta_i \srho_i(\hxi_i - \xi_i)}{(\xi_i + \srho_i)(\hxi_i + \srho_i)}, \\
F(\bhrho; \bhxi) - F(\bhrho; \bxi)
&= \sum_i \zeta_i \left( \frac{\hrho_i}{\hxi_i + \hrho_i} - \frac{\hrho_i}{\xi_i + \hrho_i} \right) \nonumber \\
&= \sum_i  \frac{\zeta_i \hrho_i(\xi_i - \hxi_i)}{(\xi_i + \hrho_i)(\hxi_i + \hrho_i)},
\end{align*}
so that \eqref{eq:upper_bound_on_err} becomes
\begin{align*}
 &\err(\bhrho) \\
 &\leq \sum_i \zeta_i (\hxi_i - \xi_i) \left(
  \frac{\srho_i}{(\xi_i + \srho_i)(\hxi_i + \srho_i)}
  - \frac{\hrho_i}{(\xi_i + \hrho_i)(\hxi_i + \hrho_i)}
  \right) \\
    &= \sum_i
  \frac{\zeta_i (\xi_i \hxi_i - \srho_i \hrho_i) (\hxi_i - \xi_i)(\srho_i - \hrho_i) }{(\xi_i + \srho_i)(\hxi_i + \srho_i)(\xi_i + \hrho_i)(\hxi_i + \hrho_i)} \\
   &\leq \sum_i
  \frac{\zeta_i |\xi_i \hxi_i - \srho_i \hrho_i| |\hxi_i - \xi_i||\srho_i - \hrho_i| }{(\xi_i + \srho_i)(\hxi_i + \srho_i)(\xi_i + \hrho_i)(\hxi_i + \hrho_i)}.
\end{align*}
Further bound:
\[
    |\xi_i \hxi_i - \srho_i \hrho_i|
    \leq \xi_i \hxi_i + \srho_i \hrho_i \leq (\hxi_i + \srho_i)(\xi_i + \hrho_i),
\]
to obtain:
\begin{align*}
 \err(\bhrho) &\leq \sum_i
  \frac{\zeta_i |\hxi_i - \xi_i||\srho_i - \hrho_i|}{(\xi_i + \srho_i)(\hxi_i + \hrho_i)} \\
  &= \sum_i \sqrt{
  \frac{\zeta_i (\xi_i + \hrho_i)(\hxi_i - \xi_i)^2}{\xi_i(\hxi_i + \hrho_i)^2}
  }
  \sqrt{
  \frac{\zeta_i \xi_i(\srho_i - \hrho_i)^2}{(\xi_i + \srho_i)^2(\xi_i + \hrho_i)}
  } \\
  &\leq \sqrt{ \sum_i
  \frac{\zeta_i (\xi_i + \hrho_i)(\hxi_i - \xi_i)^2}{\xi_i(\hxi_i + \hrho_i)^2}
  }
  \sqrt{ \sum_i
  \frac{\zeta_i \xi_i(\srho_i - \hrho_i)^2}{(\xi_i + \srho_i)^2(\xi_i + \hrho_i)}
  } \\
  &\leq \sqrt{ \sum_i
  \frac{\zeta_i (\xi_i + \hrho_i)(\hxi_i - \xi_i)^2}{\xi_i(\hxi_i + \hrho_i)^2}
  } \sqrt{\err(\bhrho)},
\end{align*}
where in the last but one inequality we used Cauchy-Schwarz inequality $\sum_i x_i y_i
\leq \sqrt{\sum_i x_i^2} \sqrt{\sum_i y_i^2}$, while in the last inequality
we used \eqref{eq:lower_bound_on_err}. Solving for $\err(\bhrho)$ gives:
\begin{align*}
\err(\bhrho) &\leq
\sum_i  \frac{\zeta_i (\xi_i + \hrho_i)(\hxi_i - \xi_i)^2}{\xi_i(\hxi_i + \hrho_i)^2} \nonumber \\
&\leq  \sum_i\frac{1}{\hxi_i \min\{\hxi_i, \xi_i\}} \zeta_i (\hxi_i - \xi_i)^2.
\end{align*}
The last inequality follows from: if $\xi_i \leq \hxi_i$,
then $\xi_i + \hrho_i \leq \hxi_i + \hrho_i$ and:
\[
    \frac{\xi_i + \hrho_i}{\xi_i (\hxi_i + \hrho_i)^2}
    \leq  \frac{\hxi_i + \hrho_i}{\xi_i (\hxi_i + \hrho_i)^2} \leq \frac{1}{\xi_i (\hxi_i + \hrho_i)}
    \leq \frac{1}{\xi \hxi_i},
\]
whereas if $\xi_i \geq \hxi_i$ then
\[
    \frac{\xi_i + \hrho_i}{\xi_i (\hxi_i + \hrho_i)^2}
    \leq  \frac{\xi_i + \hrho_i}{\xi_i \hxi_i (\hxi_i + \hrho_i)} \leq \frac{\xi_i}{\xi_i \hxi_i^2}
    = \frac{1}{\hxi_i^2},
\]
and the last inequality follows from the fact that
the function $f(x) = \frac{a + x}{b + x}$ with $a \geq b$
is maximized at $x=0$.
\end{proof}

\section{Sensitivity of Harmonic Staleness to Accuracy of Parameter Estimates}\label{sec:harmonic-staleness}
\newcommand{\bH}{H}

In this section, we consider the Harmonic staleness objective suggested in~\citet{kolobov2019staying}.
We will show that a result similar to Lemma~\ref{lemma:sensitivity} can be arrived at for that objective as well using similar techniques.

To that end, for this section define (from Problem 1 in~\citep{kolobov2019staying}):
\begin{align}
H(\brho, \bxi) = \sum_{i=1}^m \zeta_i \log \frac{\rho_i}{\rho_i + \xi_i}.
\label{eqn:harmonic-staleness}
\end{align}

\begin{restatable}[]{lem}{sensitivity-harmonic}
\label{lemma:sensitivity-harmonic}
For the expected utility $H(\brho; \bxi)$ defined in~\eqref{eqn:harmonic-staleness}, let $\bsrho = \argmax_{\brho} H(\brho; \bxi)$, $\bhrho = \argmax_{\brho} H(\brho; \bhxi)$ and define the suboptimality of $\bhrho$ as $\err(\bhrho) := H(\bsrho; \bxi) - H(\bhrho; \bxi)$. Then $\err(\bhrho) $ can be bounded by:
\begin{align*}
\err(\bhrho) \leq \frac{(\xi_{\min} + R) R^2}{\xi_{\min}^5}
\sum_{i=1}^m \zeta_i (\xi_i - \widehat{\xi}_i)^2.
\end{align*}

\end{restatable}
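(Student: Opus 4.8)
The plan is to mirror the proof of Lemma~\ref{lemma:sensitivity} almost verbatim, exploiting that $H(\brho;\bxi)$ is concave in $\brho$ on the constraint set. First I would record the key structural facts about $\bsrho$ and $\bhrho$: both lie in $\Delta_R$, hence every coordinate satisfies $\xi_{\min} \le \srho_i, \hrho_i$ is false in general but we do have $\srho_i, \hrho_i \le R$ and, crucially, from the explicit form of the optimizer in~\citep{kolobov2019staying} (or a direct Lagrangian argument) one gets lower bounds $\srho_i, \hrho_i \ge c$ for some $c$ expressible via $\xi_{\min}$ and $R$. These bounds are what will ultimately be absorbed into the prefactor $\tfrac{(\xi_{\min}+R)R^2}{\xi_{\min}^5}$. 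I would then write $\partial_{\rho_i} H(\brho;\bxi) = \zeta_i \tfrac{\xi_i}{\rho_i(\rho_i+\xi_i)}$.

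Second, I would produce the two-sided sandwich on $\err(\bhrho)$. For the \emph{lower} bound, use the first-order optimality condition $\big(\nabla_{\bsrho} H(\bsrho;\bxi)\big)^\top(\bhrho - \bsrho) \le 0$ together with a second-order (strong-concavity-type) expansion of $H(\cdot;\bxi)$ along the segment from $\bsrho$ to $\bhrho$; since $H(\cdot;\bxi)$ restricted to that segment has second derivative bounded above by a negative quantity controlled by $\max_i \rho_i \le R$ and $\min_i \rho_i \ge c$, this yields $\err(\bhrho) \ge \kappa \sum_i \zeta_i(\srho_i-\hrho_i)^2$ for an explicit $\kappa$. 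For the \emph{upper} bound, decompose exactly as in~\eqref{eq:upper_bound_on_err}:
\begin{align*}
\err(\bhrho) \le \big(H(\bsrho;\bxi) - H(\bsrho;\bhxi)\big) + \big(H(\bhrho;\bhxi) - H(\bhrho;\bxi)\big),
\end{align*}
then compute each difference explicitly. Here $H(\brho;\bxi) - H(\brho;\bhxi) = \sum_i \zeta_i\big(\log(\rho_i+\hxi_i) - \log(\rho_i+\xi_i)\big)$; by the mean value theorem this equals $\sum_i \zeta_i \tfrac{\hxi_i - \xi_i}{\rho_i + \theta_i}$ for some $\theta_i$ between $\xi_i$ and $\hxi_i$, and the two occurrences (at $\bsrho$ and $\bhrho$) combine so the leading linear-in-$(\hxi_i-\xi_i)$ terms partially cancel, leaving something proportional to $|\hxi_i - \xi_i|\cdot|\srho_i - \hrho_i|$ plus a genuinely quadratic-in-$(\hxi_i-\xi_i)$ remainder — the same phenomenon that drives Lemma~\ref{lemma:sensitivity}.

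Third, combine the two bounds via Cauchy--Schwarz exactly as in the last display block of the proof of Lemma~\ref{lemma:sensitivity}: write the $|\hxi_i-\xi_i||\srho_i-\hrho_i|$ sum as an inner product, bound it by $\sqrt{\sum_i(\text{quadratic in }\hxi_i-\xi_i)}\cdot\sqrt{\sum_i\zeta_i(\srho_i-\hrho_i)^2} \le \sqrt{\sum_i(\cdots)}\cdot\sqrt{\err(\bhrho)/\kappa}$, and solve the resulting inequality $\err \le C\sqrt{\err}$ for $\err$. Finally, I would collect all the denominator factors — each $\rho_i+\xi_i \ge \xi_{\min}$, each $\rho_i \ge c$, each $\rho_i+\theta_i \ge \xi_{\min}$, and the $1/\kappa$ from the lower bound which contributes powers of $R$ and $1/\xi_{\min}$ — and crudely upper-bound the product to land on the stated coefficient $\tfrac{(\xi_{\min}+R)R^2}{\xi_{\min}^5}$.

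The main obstacle I anticipate is obtaining the lower bound $\err(\bhrho) \ge \kappa\sum_i\zeta_i(\srho_i-\hrho_i)^2$ with the \emph{right} $\kappa$: unlike the $F$-objective, $H$'s Hessian is not uniformly bounded below on all of $(\R^+)^m$ (it degenerates as $\rho_i \to 0$), so I genuinely need the a priori lower bound $\rho_i \ge c = \Omega(\xi_{\min}^2/R)$-ish on the optimizers, which requires invoking the closed-form or KKT characterization of the harmonic-staleness optimum from~\citep{kolobov2019staying} rather than just convexity. Getting that constant sharp enough that the final exponents match $\xi_{\min}^{-5}$ is the delicate bookkeeping step; everything else is a structural copy of the proof of Lemma~\ref{lemma:sensitivity}.
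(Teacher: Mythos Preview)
Your overall architecture is exactly the paper's: a strong-concavity lower bound on $\err(\bhrho)$ in terms of $\sum_i \zeta_i(\srho_i-\hrho_i)^2$, the four-term decomposition $\err(\bhrho)\le \big(H(\bsrho;\bxi)-H(\bsrho;\bhxi)\big)+\big(H(\bhrho;\bhxi)-H(\bhrho;\bxi)\big)$, Cauchy--Schwarz, and then solving $\err \le C\sqrt{\err}$. So the plan is sound.

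However, your stated ``main obstacle'' is a misconception and, if you chased it, would make the bookkeeping harder than it needs to be. You write that $H$'s Hessian ``degenerates as $\rho_i\to 0$'' and therefore you need an a~priori lower bound $\rho_i\ge c$ on the optimizers. The sign is backwards. The diagonal Hessian entry is
\[
\partial_{\rho_i}^2 H(\brho;\bxi)
=\zeta_i\Big(\tfrac{1}{(\rho_i+\xi_i)^2}-\tfrac{1}{\rho_i^2}\Big)
=-\zeta_i\,\frac{\xi_i}{(\xi_i+\rho_i)\rho_i^2},
\]
which tends to $-\infty$ as $\rho_i\to 0$ and to $0^-$ as $\rho_i\to\infty$. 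For the lower bound $\err(\bhrho)\ge \kappa\sum_i\zeta_i(\srho_i-\hrho_i)^2$ you need an \emph{upper} bound (least negative value) on this second derivative along the segment, and that is attained at the \emph{largest} admissible $\rho_i$, not the smallest. Since every coordinate satisfies $\rho_i\le R$ and $\xi_i\ge\xi_{\min}$, one gets directly
\[
\partial_{\rho_i}^2 H(\brho;\bxi)\;\le\;-\zeta_i\,\frac{\xi_{\min}}{(\xi_{\min}+R)R^2},
\]
so $\kappa=\tfrac{\xi_{\min}}{2(\xi_{\min}+R)R^2}$ with no appeal to the closed-form optimum or any lower bound on $\rho_i$. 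This is precisely what the paper does, and it is where the factor $(\xi_{\min}+R)R^2$ in the final constant comes from.

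One smaller point: rather than applying the mean value theorem to each log separately, the paper combines the four $H$-terms algebraically into a single logarithm,
\[
\sum_i\zeta_i\log\!\left(1+\frac{(\srho_i-\hrho_i)(\xi_i-\hxi_i)}{(\srho_i+\xi_i)(\hrho_i+\hxi_i)}\right),
\]
and then uses $\log(1+x)\le x$. This makes the cross-term structure $|\hxi_i-\xi_i|\,|\srho_i-\hrho_i|$ appear immediately, and the denominators are bounded below by $\xi_{\min}^2$ using only $\xi_i,\hxi_i\ge\xi_{\min}$ (again no lower bound on $\rho_i$). Your MVT route would also work but is messier; the paper's identity is cleaner and worth using. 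After Cauchy--Schwarz and substituting the lower bound, solving for $\err(\bhrho)$ gives the constant $\tfrac{2}{\kappa\,\xi_{\min}^4}=\tfrac{2(\xi_{\min}+R)R^2}{\xi_{\min}^{5}}$, matching the statement up to the harmless factor of~$2$.
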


\begin{proof}

It is easy to show by directly computing second derivatives that $H(\brho, \bxi)$ is concave in $\brho$ and convex in $\bxi$.

As before, let
$\bsrho = \argmax_{\brho \in \Delta_R} H(\brho, \bxi)$ and
$\bhrho = \argmax_{\brho \in \Delta_R} H(\brho, \bhxi)$.
By the optimality conditions,
$\nabla_{\brho} H(\bsrho, \bxi)^\top (\brho - \bsrho) \leq 0$ for all
$\brho \in \Delta_R$.
We also bound the second derivative:
\begin{align*}
\frac{\partial^2 H(\brho, \bxi)}{\partial \rho_i \partial \rho_j}
&= \delta_{ij} \zeta_i \left(\frac{1}{(\rho_i + \xi_i)^2} - \frac{1}{\rho_i^2}\right) \\
&\leq - \delta_{ij}\zeta_i  \frac{\rho_i \xi_i + \xi_i^2}{(\xi_i + \rho_i)^2 \rho_i^2} \\
&= -\delta_{ij} \zeta_i \frac{\xi_i}{(\xi_i + \rho_i) \rho_i^2}  \\
&\leq -\delta_{ij} \zeta_i \underbrace{\frac{\xi_{\min}}{(\xi_{\min} + R) R^2}}_{C}
\end{align*}
where in the last inequality we used the fact that $\frac{\xi_i}{\xi_i + R}$ is increasing
in $\xi_i$.
Taylor expanding $H(\brho, \bxi)$ around $\bsrho$
gives for some
$\bar{\brho}$ on the interval between $\brho$ and $\bsrho$:
\begin{align*}
H(\brho, \bxi) &= H(\bsrho,\bxi) + \underbrace{\nabla_{\brho} H(\bsrho, \bxi)^\top (\brho - \bsrho)}_{\leq 0} \nonumber \\
&\qquad + \frac{1}{2} (\brho - \bsrho)^\top \nabla^2_{\brho} H(\bar{\brho}, \bxi) (\brho - \bsrho) \\
&\leq H(\bsrho,\bxi) - \frac{C}{2}  \sum_{i=1}^m \zeta_i(\rho_i - \srho_i)^2.
\end{align*}
Taking $\brho = \bhrho$, and rearranging, results in:
\[
\err(\bhrho) = H(\bsrho, \bxi) - H(\bhrho, \bxi)
\geq \frac{C}{2}\sum_{i=1}^m \zeta_i(\rho_i - \srho_i)^2.
\]
For the lower bound, note that:
\begin{align}
\err(\bhrho) &
= H(\bsrho, \bxi) - H(\bhrho, \bxi) \nonumber \\
&= H(\bsrho, \bxi) - H(\bsrho, \bhxi) +
\underbrace{H(\bsrho, \bhxi) - H(\bhrho, \bhxi)}_{\leq 0} \nonumber \\
& \qquad + H(\bhrho, \bhxi) - H(\bhrho, \bxi)  \nonumber \\
&\leq \sum_{i=1}^m \zeta_i \log \frac{(\srho_i + \widehat{\xi}_i)
(\widehat{\rho}_i + \xi_i)}{(\srho_i + \xi_i)(\widehat{\rho}_i +
\widehat{\xi}_i)}\nonumber \\
&=  \sum_{i=1}^m \zeta_i \log \left(1 + \frac{
(\srho_i - \widehat{\rho}_i)(\xi_i - \widehat{\xi}_i)}{(\srho_i + \xi_i)(\widehat{\rho}_i + \widehat{\xi}_i)}\right) \nonumber \\
&\leq \sum_{i=1}^m \zeta_i \frac{
(\srho_i - \widehat{\rho}_i)(\xi_i - \widehat{\xi}_i)}{(\srho_i + \xi_i)(\widehat{\rho}_i + \widehat{\xi}_i)} \label{eqn:log1px-ineq-harmonic}\\
&\leq \sum_{i=1}^m \zeta_i \frac{
|\srho_i - \widehat{\rho}_i||\xi_i - \widehat{\xi}_i|}{(\srho_i + \xi_i)(\widehat{\rho}_i + \widehat{\xi}_i)}\nonumber \\
&\leq \frac{1}{\xi_{\min}^2} \sum_{i=1}^m \zeta_i
|\srho_i - \widehat{\rho}_i||\xi_i - \widehat{\xi}_i| \nonumber \\
&\leq \frac{1}{\xi_{\min}^2}
\sqrt{\sum_{i=1}^m \zeta_i (\srho_i - \widehat{\rho}_i)^2}
\sqrt{\sum_{i=1}^m \zeta_i (\xi_i - \widehat{\xi}_i)^2} \label{eqn:cauchy-schwarz-harmonic}\\
&\leq \frac{1}{\xi_{\min}^2} \sqrt{\frac{2}{C}}
\sqrt{\err(\bhrho)}\sqrt{\sum_{i=1}^m \zeta_i (\xi_i - \widehat{\xi}_i)^2}, \nonumber
\end{align}
where in~\eqref{eqn:log1px-ineq-harmonic} we used
$\log(1+x) \leq x$, while in~\eqref{eqn:cauchy-schwarz-harmonic}
we used the Cauchy-Schwarz inequality
$\sum_i x_i y_i \leq \sqrt{\sum_i x_i^2}\sqrt{\sum_i y_i^2}$ with
$x_i = \sqrt{\zeta_i} |\srho_i - \widehat{\rho}_i|$ and
$y_i = \sqrt{\zeta_i} |\xi_i - \widehat{\xi}_i|$.

Solving for $\err(\bhrho)$ gives:
\begin{align*}
\err(\bhrho)
&\leq \frac{2}{C \xi_{\min}^4}
\sum_{i=1}^m \zeta_i (\xi_i - \widehat{\xi}_i)^2 \nonumber \\
&= \frac{(\xi_{\min} + R) R^2}{\xi_{\min}^5}
\sum_{i=1}^m \zeta_i (\xi_i - \widehat{\xi}_i)^2.
\end{align*}

Hence, proved.
\end{proof}

\section{Sensitivity of Accumulated Delay objective to Accuracy of Parameter Estimates}
\label{sec:accumulated-delay}

In this section, we consider the Accumulated Delay objective suggested by~\citet{sia2007efficient}.
The same objective can be arrived at if one formulates the online learning version of the Smart Broadcasting problem as formulated and solved by~\citet{redqueen17wsdm}.

The objective's differential expected utility for the accumulated delay can be expressed as:
\begin{align}
    J(\brho; \bxi) = - \sum_{i=1}^m \frac{\zeta_i \xi_i}{\rho_i}
    \label{eqn:accumulated-delay}
\end{align}

\citet{sia2007efficient} also show that the optimal solution which maximizes the utility~\eqref{eqn:accumulated-delay} under the bandwidth constraint is given by
\begin{align}
    \srho_i &= \frac{R \sqrt{\zeta_i \xi_i}}{\sum_{j=1}^m \sqrt{\zeta_j \xi_j}} \label{eqn:acc-opt-soln}\\
    \implies J(\bsrho; \bxi) &= - \frac{1}{R} \left( \sum_{i=1}^m \sqrt{ \zeta_i \xi_i } \right)^2. \label{eqn:acc-opt-utility}
\end{align}

\begin{restatable}[]{lem}{sensitivity-delay}
\label{lemma:sensitivity-delay}
For the expected utility $J(\brho; \bxi)$ defined in~\eqref{eqn:accumulated-delay}, let $\bsrho = \argmax_{\brho} J(\brho; \bxi)$, $\bhrho = \argmax_{\brho} J(\brho; \bhxi)$ and define the suboptimality of $\bhrho$ as $\err(\bhrho) := J(\bsrho; \bxi) - J(\bhrho; \bxi)$. Then $\err(\bhrho) $ can be bounded by:
\begin{align*}
\err(\bhrho) \leq \frac{\xi_{\max}^2 \left( \sum_{i=1}^m \sqrt{\zeta_i} \right)^4 }{R \zeta_{\min} \xi_{\min}^3} \sum_{i=1}^m (\xi_i - \widehat{\xi}_i)^2.
\end{align*}
\end{restatable}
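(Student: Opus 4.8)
The plan is to follow the same template used in the proof of \cref{lemma:sensitivity}: obtain a \emph{lower} bound on $\err(\bhrho)$ by exploiting strong concavity of $J(\cdot; \bxi)$ at $\bsrho$, obtain an \emph{upper} bound on $\err(\bhrho)$ of the form $(\text{something}) \times \sqrt{\err(\bhrho)}$ using the optimality of $\bhrho$ for $J(\cdot; \bhxi)$ together with Cauchy--Schwarz, and then divide through by $\sqrt{\err(\bhrho)}$. The objective $J(\brho;\bxi) = -\sum_i \zeta_i\xi_i/\rho_i$ is a sum of one-dimensional convex-in-$1/\rho_i$ terms, so the second derivative in $\rho_i$ is $-2\zeta_i\xi_i/\rho_i^3$; since $\rho_i \le R$ and $\xi_i \ge \xi_{\min}$, this is bounded above by $-2\zeta_i\xi_{\min}/R^3$, giving a strong-concavity constant $C = 2\xi_{\min}/R^3$ in the $\zeta$-weighted norm $\sum_i \zeta_i(\rho_i - \srho_i)^2$. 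A Taylor expansion of $J(\bhrho;\bxi)$ around $\bsrho$, using $\nabla_{\brho} J(\bsrho;\bxi)^\top(\bhrho - \bsrho) \le 0$ (first-order optimality over $\Delta_R$), then yields $\err(\bhrho) \ge \tfrac{C}{2}\sum_i \zeta_i(\srho_i - \hrho_i)^2$.

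For the upper bound I would split $\err(\bhrho) = \big(J(\bsrho;\bxi) - J(\bsrho;\bhxi)\big) + \underbrace{\big(J(\bsrho;\bhxi) - J(\bhrho;\bhxi)\big)}_{\le 0} + \big(J(\bhrho;\bhxi) - J(\bhrho;\bxi)\big)$, drop the middle term, and compute the two remaining differences explicitly: $J(\bsrho;\bxi) - J(\bsrho;\bhxi) = -\sum_i \zeta_i(\xi_i - \hxi_i)/\srho_i$ and $J(\bhrho;\bhxi) - J(\bhrho;\bxi) = -\sum_i \zeta_i(\hxi_i - \xi_i)/\hrho_i = \sum_i \zeta_i(\xi_i - \hxi_i)/\hrho_i$. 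Adding, the cross terms combine into $\sum_i \zeta_i(\xi_i - \hxi_i)(\srho_i - \hrho_i)/(\srho_i \hrho_i)$, which I bound in absolute value by $\sum_i \zeta_i|\xi_i - \hxi_i|\,|\srho_i - \hrho_i|/(\srho_i\hrho_i)$. Using the closed form \eqref{eqn:acc-opt-soln} for $\srho_i$ (and the analogous formula for $\hrho_i$ with $\hxi_i$), one sees $\srho_i \ge R\sqrt{\zeta_{\min}\xi_{\min}}/(\sqrt{\xi_{\max}}\sum_j\sqrt{\zeta_j})$ and similarly for $\hrho_i$; hence $1/(\srho_i\hrho_i) \le \xi_{\max}(\sum_j\sqrt{\zeta_j})^2 / (R^2 \zeta_{\min}\xi_{\min})$, a uniform constant. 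Factor $\sqrt{\zeta_i}|\xi_i-\hxi_i|$ and $\sqrt{\zeta_i}|\srho_i-\hrho_i|$ and apply Cauchy--Schwarz to get $\err(\bhrho) \le \frac{\xi_{\max}(\sum_j\sqrt{\zeta_j})^2}{R^2\zeta_{\min}\xi_{\min}}\sqrt{\sum_i\zeta_i(\xi_i-\hxi_i)^2}\sqrt{\sum_i\zeta_i(\srho_i-\hrho_i)^2}$, then substitute the lower bound $\sqrt{\sum_i\zeta_i(\srho_i-\hrho_i)^2} \le \sqrt{2\err(\bhrho)/C}$ and solve for $\err(\bhrho)$.

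Solving gives $\err(\bhrho) \le \frac{2}{C}\Big(\frac{\xi_{\max}(\sum_j\sqrt{\zeta_j})^2}{R^2\zeta_{\min}\xi_{\min}}\Big)^2 \sum_i\zeta_i(\xi_i-\hxi_i)^2$; plugging $C = 2\xi_{\min}/R^3$ produces $\frac{R^3}{\xi_{\min}}\cdot\frac{\xi_{\max}^2(\sum_j\sqrt{\zeta_j})^4}{R^4\zeta_{\min}^2\xi_{\min}^2}\sum_i\zeta_i(\xi_i-\hxi_i)^2$, and bounding $\zeta_i \le$ (absorb one factor of $\zeta_{\min}$ from the denominator, or note $\zeta_i(\xi_i-\hxi_i)^2 \le \tfrac{\zeta_i}{\zeta_{\min}}\cdot\zeta_{\min}(\dots)$) matches the claimed $\frac{\xi_{\max}^2(\sum_i\sqrt{\zeta_i})^4}{R\zeta_{\min}\xi_{\min}^3}\sum_i(\xi_i-\hxi_i)^2$ up to the handling of the $\zeta_i$ weight. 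The main obstacle I anticipate is the lower bound on $\srho_i$ and $\hrho_i$: the Taylor/strong-concavity argument needs $\bar\brho$ on the segment between $\bhrho$ and $\bsrho$ to have coordinates bounded away from $0$, which requires arguing that every optimal $\rho$ stays above a fixed positive floor — this follows from \eqref{eqn:acc-opt-soln} since both $\zeta_i$ and $\xi_i$ (resp. $\hxi_i$) are bounded below, but the bookkeeping of which constants ($\zeta_{\min}$, $\xi_{\min}$, $\xi_{\max}$, $\sum_j\sqrt{\zeta_j}$) appear where, and getting them to collapse into the stated exponents, is the delicate part.
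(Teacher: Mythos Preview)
Your high-level template is exactly the one the paper uses: strong concavity at $\bsrho$ for a lower bound on $\err(\bhrho)$ in terms of $\|\bsrho-\bhrho\|^2$, the three-way split $J(\bsrho;\bxi)-J(\bsrho;\bhxi)+[\,\le 0\,]+J(\bhrho;\bhxi)-J(\bhrho;\bxi)$ for the upper bound, Cauchy--Schwarz, and then the self-bounding trick. Your algebra up to
\[
\err(\bhrho)\;\le\;\frac{\xi_{\max}^2\big(\sum_j\sqrt{\zeta_j}\big)^4}{R\,\zeta_{\min}^{2}\,\xi_{\min}^{3}}\sum_i\zeta_i(\xi_i-\hxi_i)^2
\]
is correct. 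The gap is in the very last step: you cannot ``absorb one factor of $\zeta_{\min}$'' to get rid of the $\zeta_i$ weight, because that would require $\zeta_i\le\zeta_{\min}$, which is the wrong direction. Your bound is a valid inequality, but it is strictly weaker than the one stated in the lemma (by a factor $\zeta_i/\zeta_{\min}\ge 1$ inside the sum, or $\zeta_{\max}/\zeta_{\min}$ in the worst case).

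The paper avoids this loss by a different factorization of the upper bound. Instead of bounding $1/(\srho_i\hrho_i)$ uniformly and then splitting $\zeta_i$ as $\sqrt{\zeta_i}\cdot\sqrt{\zeta_i}$, it substitutes the closed forms $\srho_i=R\sqrt{\zeta_i\xi_i}/S$ and $\hrho_i=R\sqrt{\zeta_i\hxi_i}/\widehat S$ (with $S=\sum_j\sqrt{\zeta_j\xi_j}$, $\widehat S=\sum_j\sqrt{\zeta_j\hxi_j}$) and rewrites
\[
\err(\bhrho)\;\le\;\frac{S\widehat S}{R}\sum_i\Big(\tfrac{\sqrt{\zeta_i\xi_i}}{S}-\tfrac{\sqrt{\zeta_i\hxi_i}}{\widehat S}\Big)\Big(\sqrt{\tfrac{\hxi_i}{\xi_i}}-\sqrt{\tfrac{\xi_i}{\hxi_i}}\Big).
\]
Now Cauchy--Schwarz is applied with $x_i=\big|\sqrt{\zeta_i\xi_i}/S-\sqrt{\zeta_i\hxi_i}/\widehat S\big|$ and $y_i=\big|\sqrt{\hxi_i/\xi_i}-\sqrt{\xi_i/\hxi_i}\big|$. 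The point is that all of $\zeta_i$ sits in $x_i$, and the lower bound (also expressed via the closed forms) reads $\err(\bhrho)\ge \frac{\zeta_{\min}\xi_{\min}}{R}\sum_i x_i^2$, so $\sum_i x_i^2\le R\,\err(\bhrho)/(\zeta_{\min}\xi_{\min})$. Meanwhile $\sum_i y_i^2=\sum_i(\hxi_i-\xi_i)^2/(\xi_i\hxi_i)\le \xi_{\min}^{-2}\sum_i(\hxi_i-\xi_i)^2$ carries no $\zeta$ at all. This is what produces exactly one $\zeta_{\min}$ in the denominator and an unweighted $\sum_i(\xi_i-\hxi_i)^2$. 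Your route, by pulling $1/(\srho_i\hrho_i)$ out first, spends the $\zeta_{\min}$ floor twice.

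One minor remark: your worry about lower-bounding $\bar\rho_i$ on the segment for the Taylor step is unnecessary. The strong-concavity bound only needs $\bar\rho_i\le R$ (to lower-bound $2\zeta_i\xi_i/\bar\rho_i^{\,3}$), which holds on the segment by convexity of $\Delta_R$. You do need the lower bounds on $\srho_i$ and $\hrho_i$ for the $1/(\srho_i\hrho_i)$ estimate in your upper bound, and those you obtain correctly from \eqref{eqn:acc-opt-soln}.
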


\begin{proof}
We will follow largely the same steps as in the proof of~\cref{lemma:sensitivity-harmonic} to prove this result as well.
Let $\bsrho = \argmax_{\brho \in \Delta_R} J(\brho; \bxi)$ and $\bhrho = \argmax_{\brho \in \Delta_R} J(\brho; \bhxi)$. By the optimality conditions, $\nabla_{\brho} J(\bsrho)^{T} (\brho - \bsrho) \le 0$ for all $\brho \in \Delta_R$. We can bound the second derivative as well:
\begin{align*}
\frac{\partial^2 J(\brho; \bxi)}{\partial \rho_i \partial \rho_j} &= -2 \delta_{ij} \frac{\zeta_i \xi_i}{\rho_i^3} \\
&\leq -2 \delta_{ij} \frac{\zeta_{\min} \xi_{\min}}{R^3}.
\end{align*}

Taylor expanding $J(\brho; \bxi)$ around $\bsrho$ gives for some point $\bar{\brho}$ between $\brho$ and $\bsrho$
\begin{align*}
J(\brho; \bxi) &= J(\bsrho; \bxi) + \underbrace{\nabla_{\bsrho} J(\brho; \bxi)^{T} (\brho - \bsrho)}_{\le 0} \\
& \qquad + \frac{1}{2} (\brho - \bsrho)^{T} \nabla^2_{\brho} J(\bar{\brho}; \bxi) (\brho - \bsrho) \\
& \leq J(\bsrho; \bxi) - \frac{\zeta_{\min} \xi_{\min} }{R^3} \sum_{i=1}^m (\rho_i - \srho_i)^2
\end{align*}

Take $\brho = \bhrho$, using the closed form optimal solutions for $\bsrho$ and $\bhrho$ from~\eqref{eqn:acc-opt-soln}, using $\bxi$ and $\bhxi$ respectively, and rearranging terms, we have:
\begin{align*}
    \err(\bhrho) &= J(\bsrho; \bxi) - J(\bhrho; \bxi) \\
    &\ge \frac{\zeta_{\min} \xi_{\min}}{R^3} \sum_{i=1}^m \left( \hrho_i - \srho_i \right)^2 \\
    &= \frac{\zeta_{\min} \xi_{\min}}{R^3} \sum_{i=1}^m \left( \frac{R \sqrt{\zeta_i \xi_i}}{\sum_{j=1}^m \sqrt{\zeta_i \xi_j}} - \frac{R \sqrt{\zeta_i \hxi_i}}{\sum_{j=1}^m \sqrt{\zeta_j \hxi_j}}\right)^2 \\
    &= \frac{\zeta_{\min} \xi_{\min}}{R} \sum_{i=1}^m \left( \frac{\sqrt{\zeta_i \xi_i}}{\sum_{j=1}^m \sqrt{\zeta_i \xi_j}} - \frac{\sqrt{\zeta_i \hxi_i}}{\sum_{j=1}^m \sqrt{\zeta_j \hxi_j}}\right)^2 \numberthis \label{eqn:acc-lower-bound}
\end{align*}

Next, for the upper bound on the error, consider
\begin{align*}
    \err(\bhrho) &= J(\bsrho; \bxi) - J(\bsrho; \bhxi) + \underbrace{J(\bsrho; \bhxi) - J(\bhrho; \bhxi)}_{\le 0} \\
    & \qquad + J(\bhrho; \bhxi) - J(\bhrho; \bxi) \\
    &\le J(\bsrho; \bxi) - J(\bsrho; \bhxi) + J(\bhrho; \bhxi) - J(\bhrho; \bxi)\\
    &= - \frac{1}{R} \left(\sum_{i=0}^m \sqrt{\zeta_i \xi_i} \right)^2 - \frac{1}{R} \left(\sum_{i=0}^m \sqrt{\zeta_i \hxi_i} \right)^2 \\
    & \qquad + \frac{1}{R} \left( \sum_{i=1}^m \frac{\zeta_i \hxi_i}{\sqrt{\zeta_i \xi_i}} \right) \left( {\sum_{j=1}^m \sqrt{\zeta_j \xi_j}} \right) \\
    & \qquad + \frac{1}{R} \left( \sum_{i=1}^m \frac{\zeta_i \xi_i}{\sqrt{\zeta_i \hxi_i}} \right) \left( {\sum_{j=1}^m \sqrt{\zeta_j \hxi_j}} \right) \\
    &= \frac{\left( \sum_{j=0}^m \sqrt{\zeta_j \xi_j} \right) \left( \sum_{j=0}^m \sqrt{\zeta_j \hxi_j} \right) }{R} \\
    &\qquad \times \left[ \sum_{i=0}^m \left( \frac{\sqrt{\zeta_i \xi_i}}{\sum_{j=1}^m \sqrt{\zeta_i \xi_j}} - \frac{\sqrt{\zeta_i \hxi_i}}{\sum_{j=1}^m \sqrt{\zeta_j \hxi_j}} \right) \right. \\
    &\qquad \qquad \qquad \left. \times \left( \sqrt{\frac{\hxi_i}{\xi}} - \sqrt{\frac{\xi_i}{\hxi_i}} \right)  \right]\\
\end{align*}

Denote $S = \sum_{j=0}^m \sqrt{\zeta_j \xi_j}$, $\widehat{S} = \sum_{j=0}^m \sqrt{\zeta_j \hxi_j}$ and use Cauchy-Schwarz inequality $\sum_i x_i y_i \leq \sqrt{\sum_i x_i^2}\sqrt{\sum_i y_i^2}$ with
$x_i = \left\vert \frac{\sqrt{\zeta_i \xi_i}}{\sum_{j=1}^m \sqrt{\zeta_i \xi_j}} - \frac{\sqrt{\zeta_i \hxi_i}}{\sum_{j=1}^m \sqrt{\zeta_j \hxi_j}} \right\vert$ and
$y_i = \left\vert \sqrt{\frac{\hxi_i}{\xi}} - \sqrt{\frac{\xi_i}{\hxi_i}} \right\vert$ to get:
\begin{align*}
    \err(\bhrho)
    &\le \frac{S \widehat{S}}{R} \left[ \sum_{i=0}^m \left( \frac{\sqrt{\zeta_i \xi_i}}{S} - \frac{\sqrt{\zeta_i \hxi_i}}{\widehat{S}} \right) \left( \sqrt{\frac{\hxi_i}{\xi}} - \sqrt{\frac{\xi_i}{\hxi_i}} \right)  \right]\\
    &\le \frac{S \widehat{S}}{R}
    \left[ \sum_{i=0}^m \left\vert \frac{\sqrt{\zeta_i \xi_i}}{S} - \frac{\sqrt{\zeta_i \hxi_i}}{\widehat{S}} \right\vert\left\vert \sqrt{\frac{\hxi_i}{\xi}} - \sqrt{\frac{\xi_i}{\hxi_i}} \right\vert  \right]\\
    &\le \frac{S \widehat{S}}{R} \sqrt{ \sum_{i=0}^m \left( \frac{\sqrt{\zeta_i \xi_i}}{S} - \frac{\sqrt{\zeta_i \hxi_i}}{\widehat{S}} \right)^2 } \sqrt{ \sum_{i=0}^m \left( \sqrt{\frac{\hxi_i}{\xi}} - \sqrt{\frac{\xi_i}{\hxi_i}} \right)^2 } \\
    &\le \frac{S \widehat{S}}{R} \sqrt{ \frac{R\,\err(\bhrho)}{\zeta_{\min} \xi_{\min}} } \sqrt{ \sum_{i=0}^m \left( \sqrt{\frac{\hxi_i}{\xi}} - \sqrt{\frac{\xi_i}{\hxi_i}} \right)^2 } \\
\end{align*}
where we have used~\eqref{eqn:acc-lower-bound} in the last inequality.
Because we know $\forall i \in [m].\  \xi_{\min} \le \xi_i, \hxi_i \le \xi_{\max}$, we have $S, \widehat{S} \le \sqrt{\xi_{\max}} \left( \sum_{i=0}^m \sqrt{\zeta_i} \right)$.
\begin{align*}
    \sqrt{\err(\bhrho)} &\le \frac{S \widehat{S}}{\sqrt{R \zeta_{\min} \xi_{\min} }} \sqrt{\sum_{i=0}^m \frac{(\hxi_i - \xi_i)^2}{\xi_i \hxi_i} } \\
    \err(\bhrho) &\le \frac{S^2 \widehat{S}^2}{R \zeta_{\min} \xi_{\min} } \sum_{i=0}^m \frac{(\hxi_i - \xi_i)^2}{\xi_i \hxi_i} \\
    &\le \frac{\xi_{\max}^2 \left( \sum_{i=1}^m \sqrt{\zeta_i} \right)^4 }{R \zeta_{\min} \xi_{\min}^3} \sum_{i=1}^m (\xi_i - \widehat{\xi}_i)^2.
\end{align*}

Hence, proved.
\end{proof}

\section{Proof of~\cref{lemma:uiparameterconcentration}}
\uiparameterconcentration*
\begin{proof}
Running the uniform-interval policy for time $\tau$ results in $N = \frac{\tau R}{m}$ observations collected for each webpage with time intervals
$w_n = \frac{m}{R}$ for all $n = 1,\ldots, N$, including an observation made at $y_{i,0} := 0$, so that
$\frac{1}{N} \sum_{n=1}^N w_n e^{-\xi_{\max} w_n} = \frac{m}{R} e^{-\xi_{\max} m / R}$.
Substituting these in Lemma~\ref{lemma:concentration}, we have that for the $i$-th webpage,
with probability at most $\nicefrac{\delta}{m}$ it holds
\[
|\hxi_i - \xi_i| > \left(\frac{m}{R} e^{-\frac{\xi_{\max} m}{R}}\right)^{-1}
    \sqrt{\frac{\log \frac{2m}{\delta}}{2 \frac{\tau R}{m}}}
    = e^{\frac{\xi_{\max} m}{R}} \sqrt{\frac{R \log\frac{2m}{\delta}}{2 \tau m}}.
\]
By the union bound, the above event occur for any $i \in \{1,\ldots,m\}$ with probability
at most $\delta$, which finishes the proof.
\end{proof}

\section{Utility of $\bkappa^{\ui}$ and $\brho^{\ur}$}\label{sec:ui-regret}

In this section, we show that the uniform-interval exploration policy $\bkappa^{\ui}$ has lower regret than the uniform-rate $\brho^{\ur}$ exploration policy under the assumption that the exploration horizon $\tau$ is a multiple of $\frac{R}{m}$.
The uniform-intervals policy $\bkappa^{\ui}$ regularly refreshes each webpage at regular intervals of size $\frac{m}{R}$. Because $\tau$ is assumed to be a multiplicity of $\frac{m}{R}$, there will be exactly $\frac{\tau}{\nicefrac{m}{R}}$ refreshes made of the webpages.
By using the definition of utility given in~\eqref{eq:utility}, we can show that the expected utility of the uniform-interval policy $\bkappa^{\ui}$ during the exploration phase is given by:
\begin{align}
    \EE[ U([0, \tau], &\bkappa^{\ui}; \bxi) ] \nonumber \\
    &= \frac{1}{m} \sum_{i=1}^m \zeta_i \int_{0}^{\tau} \prob_{\bkappa^{\ui}}(\fresh(i, t))\, dt  \nonumber \\
    &= \frac{1}{m} \sum_{i=1}^m \zeta_i \sum_{j=1}^{ \frac{\tau}{\nicefrac{m}{R}}} \int_{\frac{m(j-1)}{R}}^{\frac{mj}{R}} \prob_{\bkappa^{\ui}}\left( \fresh(i, t) \right)\, dt \nonumber \\
    &= \frac{1}{m} \sum_{i=1}^m \zeta_i \sum_{j=1}^{ \frac{\tau}{\nicefrac{m}{R}}} \int_{0}^{\frac{m}{R}} \prob_{\bkappa^{\ui}}\left( \fresh(i, t) \right)\, dt \nonumber \\
    &= \frac{1}{m} \frac{\tau}{\frac{m}{R}} \sum_{i=1}^m \zeta_i \int_{0}^{\frac{m}{R}} \prob_{\bkappa^{\ui}}\left( \fresh(i, t) \right)\, dt\label{eqn:utility-ui-prob} \\
    &= \frac{1}{m} \frac{\tau}{\frac{m}{R}} \sum_{i=1}^m \zeta_i \int_{0}^{\frac{m}{R}} e^{-\xi_i t} \, dt \label{eqn:utility-ui-explicit}\\
    &= \frac{\tau}{m} \sum_{i=1}^m \zeta_i \frac{(1 - e^{-\xi_i \frac{m}{R}})}{\xi_i \frac{m}{R}} \nonumber %
\end{align}
where the equality between~\eqref{eqn:utility-ui-prob} and~\eqref{eqn:utility-ui-explicit} can be established by seeing that probability a page is fresh at time $t \in [0, \frac{m}{R}]$ is equal to the probability that no change event has occurred in $[0, t]$, \ie, $E^\emptyset_i[0, t]$, which is equal to $\prob(E^\emptyset_i[0, t]) = e^{-\xi_i t}$.

Next, the utility of the uniform-rates policy $\brho^{\ur}$ can be easily calculated under the conditions of the Lemma~\ref{lemma:burn-in} by setting $\forall i.\, \rho_i = \frac{R}{m}$ in~\eqref{eqn:utility-approx} as:
\begin{align*}
    \EE[U([0, \tau], \brho^{\ur}; \bxi)] = \frac{\tau}{m} \times F(\brho^{\ur}; \bxi) = \frac{\tau}{m} \sum_{i=1}^m \frac{\zeta_i}{1 + \xi_i\frac{m}{R}}
\end{align*}
Hence, the regret suffered by the uniform-rate policy during exploration is greater than the regret suffered by the uniform-interval policy:
\begin{align*}
    R(\tau, \brho^{\ur}; \bxi) &- R(\tau, \bkappa^{\ui}; \bxi) \\
    &= \frac{\tau}{m} \sum_{i=1}^m \zeta_i \left( \frac{1 - e^{-\xi_i\frac{m}{R}}}{\xi_i\frac{m}{R}} - \frac{1}{1 + \xi_i\frac{m}{R}} \right) \\
    &\ge 0
\end{align*}
where the inequality follows from $e^x \ge 1 + x$, for any $x \in \RR$.

\section{Exploration using $\brho \in \Delta_R$ and Committing to use $\bkappa \in \Kcal_R$}\label{sec:justification}

To arrive at theoretical guarantees for the ETC algorithm, we perform exploration using uniform intervals policy $\bkappa^{\ui} \in \Kcal_R$ while for the exploitation phase, we use a policy $\bhrho \in \Delta_R$.
In this section, we justify why we refrain from performing exploration using the policy $\brho^{\ur}$ or use a policy %
$\bkappa \in \Kcal_R$ for commit phase.

\subsection{Parameter Estimation using $\brho^{\ur} \in \Delta_R$}

The randomness added during the estimation phase makes it technically very difficult to bound the error in the estimates.
Notice that Lemma~\ref{lemma:concentration} relies on knowing the distribution of the window lengths $w_n$ and the total number of refreshes made $N$.
If these quantities are random, \ie, $N \sim \textrm{Poisson}(\frac{\tau R}{m})$ and $w_n$ are inter-refresh times, then bounding the probability of arriving at accurate estimates $\bhxi$, \ie, $\prob\left( | \hxi_i - \xi_i | < \varepsilon \right)$, becomes intractable.
On the other hand, we can circumvent these problems by using $\bkappa^{\ui} \in \Kcal_R$ (see proof of Lemma~\ref{lemma:uiparameterconcentration}).

\subsection{Sensitivity of Utility to Parameter Estimation Accuracy using $\bhkappa \in \Kcal_R$}

To allow for easy exposition, we will refer to policies in $\bkappa = (\kappa_1, \dots, \kappa_m) \in \Kcal_R$ via the corresponding policy in $\brho = (\rho_1, \dots, \rho_m) \in \Delta_R$, such that $\rho_i = \nicefrac{1}{\kappa_i}$.
Then the differential utility function for policies in $\Kcal_R$ can be written as (see Section~\ref{sec:ui-regret}):
\begin{align*}
G(\brho; \bxi) &= \sum_{i=1}^m \zeta_i \frac{1 - e^{-\nicefrac{\xi_i}{\rho_i}}}{\nicefrac{\xi_i}{\rho_i}}
\end{align*}
This function is also concave with respect to $\brho$ and can be optimized using an algorithm similar to the one proposed by~\citet{Azar8099} or~\citet{duchi2008efficient}, albeit with more involved calculations.

However, we run into a problem while trying to determine how sensitive the utility functions are to errors in estimation of the parameters $\bhxi$.
The second derivative of the utility function is given by
\begin{align}
 \frac{\partial^2 G(\brho; \bxi)}{\partial \rho_i^2} &= - \frac{\zeta_i \xi_i e^{-\nicefrac{\xi_i}{\rho_i}}}{\rho_i^3}
 \label{eqn:G-prime-prime}
\end{align}

Contrasting it with the derivatives for the utility function for policies in $\Delta_R$
\begin{align*}
\frac{\partial^2 F(\brho; \bxi)}{\partial \rho_i^2} &= - \frac{2\zeta_i \xi_i}{(\xi_i + \rho_i)^3}
\end{align*}
reveals that while the second derivative of $F(\brho,\bxi)$ can be bounded if we have a bound $\xi_{\min}$ on the values of $\xi_i$, no such bound can be proposed for the second derivative of $G(\brho; \bxi)$ in~\eqref{eqn:G-prime-prime}, which can be arbitrarily close to zero.
Hence, as the curvature for the objective function $G(\brho; \bxi)$ cannot be bounded, we will not be able to provide a quadratic bound akin to Lemma~\ref{lemma:sensitivity}, in this setting.

\section{Performance of Moment Matching Estimator}\label{sec:moment-estimator}

\begin{figure}[t]
  \centering
  \subfloat[$\xi = 0.15$, $\rho = 0.25$]{\includegraphics[width=0.23\textwidth]{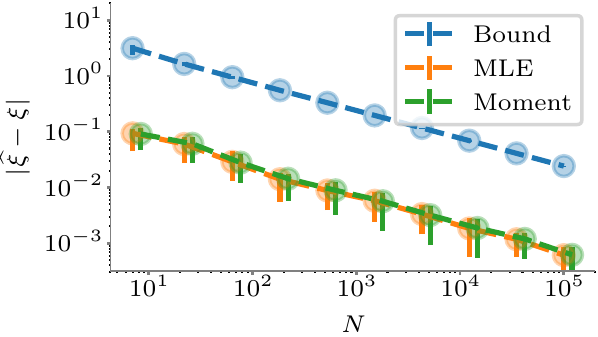}\label{fig:xi-1}}\hspace{0.01\textwidth}%
  \subfloat[$\xi = 0.50$, $\rho = 0.25$]{\includegraphics[width=0.23\textwidth]{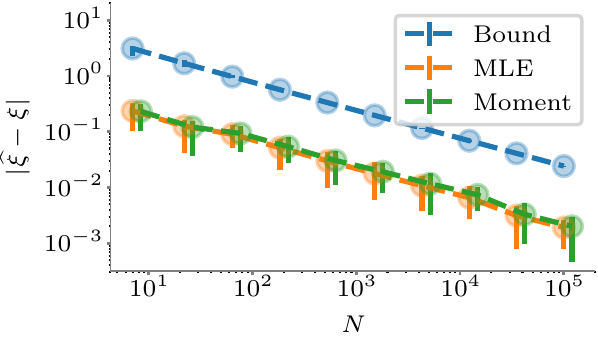}\label{fig:xi-2}}\hspace{0.01\textwidth}%
  \subfloat[$\xi = 0.95$, $\rho = 0.25$]{\includegraphics[width=0.23\textwidth]{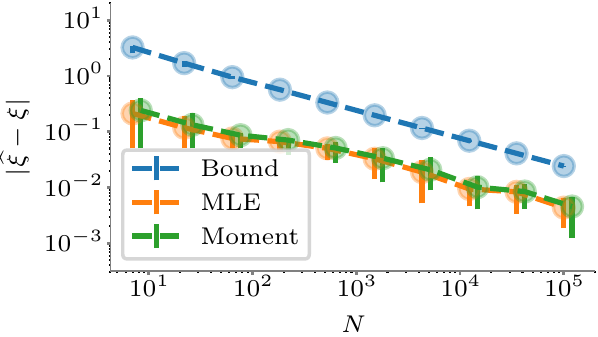}\label{fig:xi-3}}\hspace{0.01\textwidth}%
  \subfloat[$\xi = 0.15$, $\rho = 0.75$]{\includegraphics[width=0.23\textwidth]{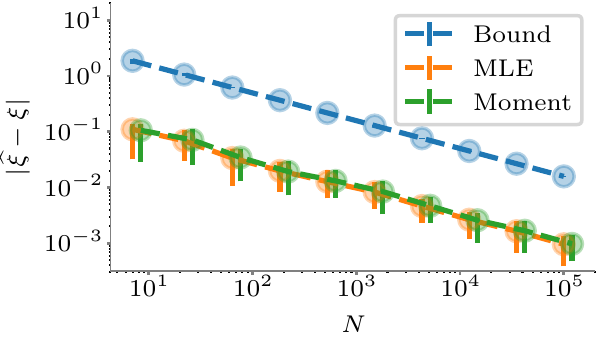}\label{fig:xi-4}}\hspace{0.01\textwidth}%
  \subfloat[$\xi = 0.50$, $\rho = 0.75$]{\includegraphics[width=0.23\textwidth]{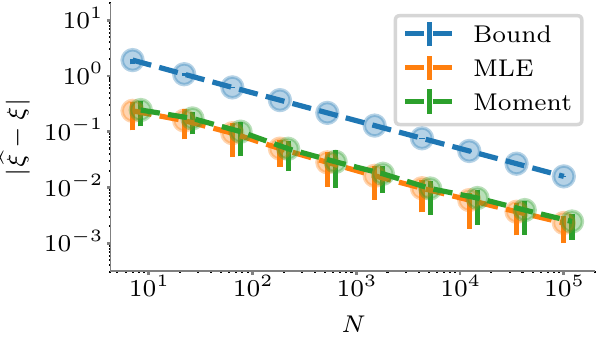}\label{fig:xi-5}}\hspace{0.01\textwidth}%
  \subfloat[$\xi = 0.95$, $\rho = 0.75$]{\includegraphics[width=0.23\textwidth]{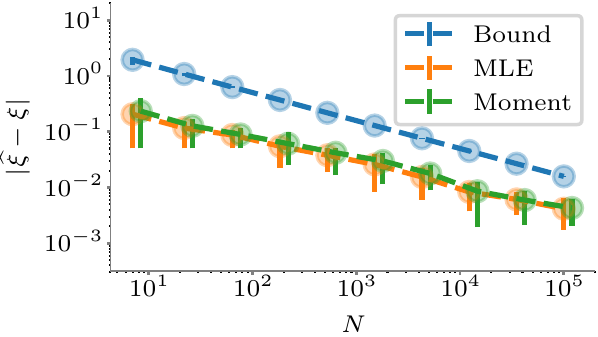}\label{fig:xi-6}}\hspace{0.01\textwidth}%
  \caption{Error in the estimates produced by the moment matching estimator (in green) compared to the upper bound (in blue) and the MLE estimator (in orange) for three different values of $\xi$.
  To calculate the bound, $\xi_{\max}$ was assumed to be $1$.
  The first row shows the estimation error when the refresh rate was given by $\rho = 0.25$ (refresh) events per unit time, while the second row shows the results for $\rho=0.75$ events per unit time.
  The error bars show 25-75 percentiles of error across simulations.}
  \label{fig:moment-estimator}
\end{figure}

In this section, we consider the performance of the moment matching estimator~\eqref{eqn:xihat-definition} and the bounds on its performance proposed in Lemma~\ref{lemma:concentration} on simulated and real data.

In the simulated experiments below, we have assumed that $\xi_{\min} = 0.1$ and $\xi_{\max} = 1.0$.
For a fixed number of observations $N$, known $\bxi$ and a fixed random-seed, we simulate times to refresh a webpage $(y_0 := 0) \cup (y_n)_{n=1}^N$ as times drawn from a Poisson process with rate $\rho \in \{ 0.25, 0.75 \}$. Then we draw $(o_n)_{n=1}^N$ by stochastically determining whether an event happened between $y_{n} - y_{n-1}$.
Next, we calculate the estimates $\hxi$ using the MLE estimator and the moments matching estimator. We also determine what is the bound on the error proposed by Lemma~\ref{lemma:concentration} with $\delta = 0.1$. We record the error in the estimates and then re-run the simulation 50 times with different seeds.
The results are shown in Figure~\ref{fig:moment-estimator}.
Plots for other values of the parameters were similar qualitatively.
It can be seen that the performance of the MLE estimator is not discernibly different from the performance of the moment matching estimator.

\begin{figure}[t]
  \centering
  \subfloat[$\xi = 0.15$, $\rho = 0.25$]{\includegraphics[width=0.23\textwidth]{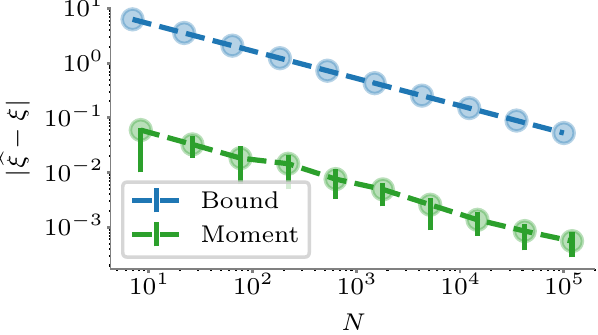}\label{fig:xi-1-det}}\hspace{0.01\textwidth}%
  \subfloat[$\xi = 0.50$, $\rho = 0.25$]{\includegraphics[width=0.23\textwidth]{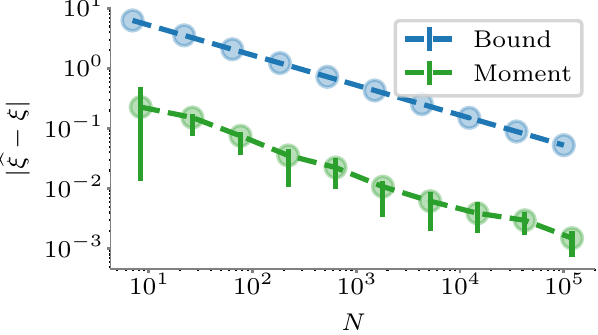}\label{fig:xi-2-det}}\hspace{0.01\textwidth}%
  \subfloat[$\xi = 0.95$, $\rho = 0.25$]{\includegraphics[width=0.23\textwidth]{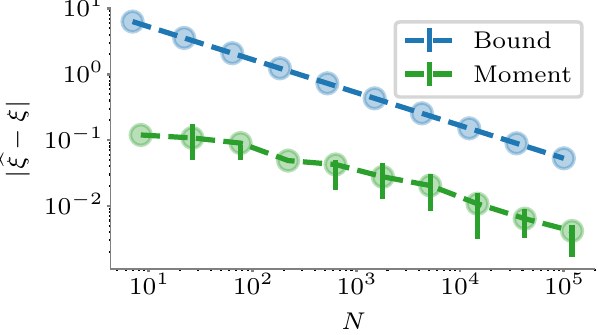}\label{fig:xi-3-det}}\hspace{0.01\textwidth}%
  \subfloat[$\xi = 0.15$, $\rho = 0.75$]{\includegraphics[width=0.23\textwidth]{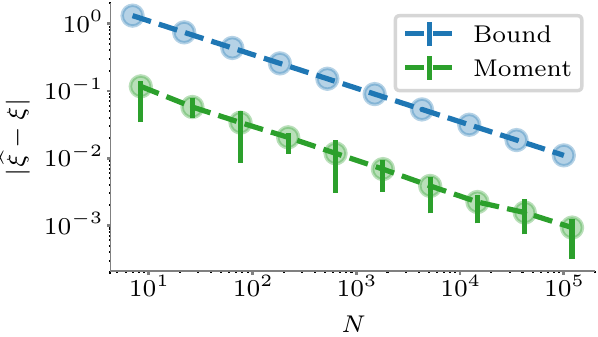}\label{fig:xi-4-det}}\hspace{0.01\textwidth}%
  \subfloat[$\xi = 0.50$, $\rho = 0.75$]{\includegraphics[width=0.23\textwidth]{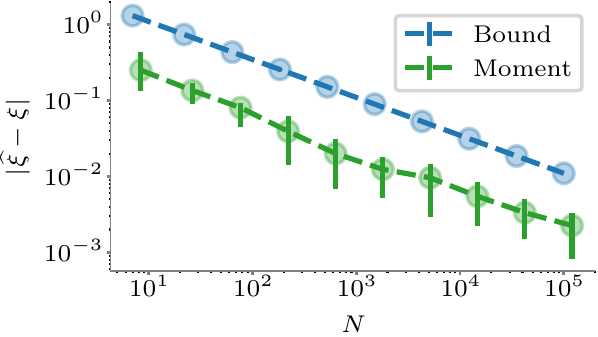}\label{fig:xi-5-det}}\hspace{0.01\textwidth}%
  \subfloat[$\xi = 0.95$, $\rho = 0.75$]{\includegraphics[width=0.23\textwidth]{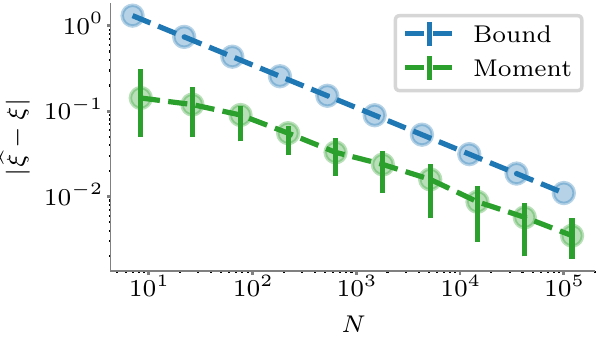}\label{fig:xi-6-det}}\hspace{0.01\textwidth}%
  \caption{Error in the estimates produced by the moment matching estimator for three different values of $\xi$ under the uniform-interval setting. The MLE estimator is not plotted here because it coincides with the moments matching estimator.
  To calculate the bound, $\xi_{\max}$ was assumed to be $1$ and that all intervals are equal with $w_n = \nicefrac{1}{\rho}$.
  The first row shows the estimation error when the refresh rate was given by $\rho = 0.25$ (refresh) events per unit time, while the second row shows the results for $\rho=0.75$ events per unit time.
  The error bars show 25-75 percentiles of error across simulations.
  Compared to Figure~\ref{fig:moment-estimator}, we can see that the bound is slightly looser in the deterministic setting, but the same trend is seen that the bound gets tighter the closer $\xi$ gets to $\xi_{\max}$.
  }
  \label{fig:moment-estimator-det}
\end{figure}

Similarly, we performed the same experiments with fixed-interval policies as well, \ie, when we observed (refreshed) a page $i$ at regular intervals of time $\nicefrac{1}{\rho}$ instead of drawing the times from a Poisson process. Under this setting, both our estimator and the MLE estimator are identical and Figure~\ref{fig:moment-estimator-det} shows the performance under the same setup as before.
In both settings, we can see that the bound decreases with (i) increasing $N$ within each plot, and with (ii) increasing $\rho$, which effects the size of interval $w_n = y_n - y_{n-1}$, across the rows. Also, the bound gets tighter as $\xi$ gets closer to $\xi_{\max}$. Additionally,
These figures also show that the bound is tight up-to constants irrespective of whether the observation (refresh) intervals are stochastic or deterministic, and it is tighter for stochastic intervals than for deterministic intervals.

\begin{figure}[t]
    \centering
    \includegraphics[width=0.35\textwidth]{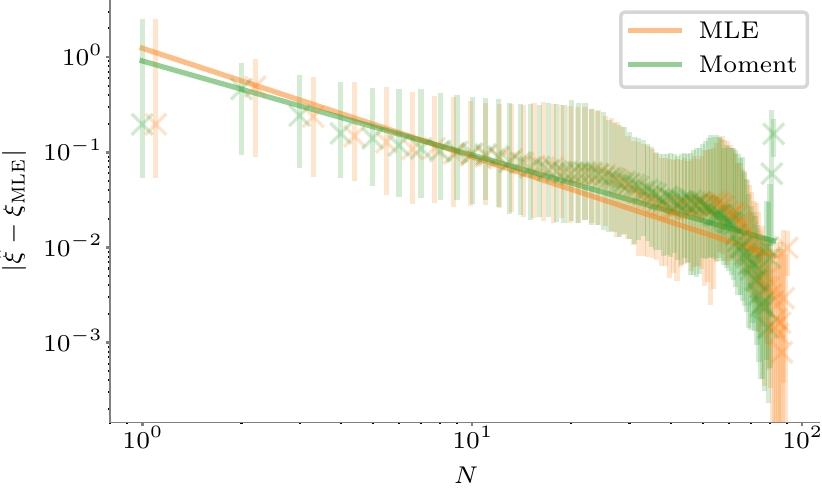}
    \caption{Performance of the Moment Matching estimator and MLE estimator on real-data when compared against the best possible MLE estimate for each webpage (indicated by $\xi_{MLE}$. The sampling policy was decided by the Bing web-crawler and is close to $\rho \approx 0.5$.}
    \label{fig:real-data-1000}
\end{figure}

For the real-data, we take a sample of 1000 webpages from the MSMACRO dataset~\citep{kolobov2019optimal}.
The dataset was crawled by Bing over a 14 week period with each page being visited roughly once every 2 days, \ie, $\rho \approx 0.5$ updates/day.
The best estimate we can make is given by the MLE estimate at the end of the 14 day period. We denote this as $\xi_{\textsc{mle}}$.
Figure~\ref{fig:real-data-1000} shows that the error in estimates obtained after $N$ observations. We see a similar trend as was seen in the synthetic experiments.

\section{Parameter Estimation with Full Observations}\label{sec:full-observability}

In this section, we consider the full observability setting, \ie, when instead of observing whether the page changed or not, one can observe the total number of changed which happened to the webpage in between two observations.
This is often the case while retrieving, \eg, ATOM/RSS feeds~\citep{sia2007efficient}.

The total number of events which a Poisson process with rate $\xi$ produces in time $t$ is a Poisson random variable with mean $\xi \times t$. Additionally, we have:

\begin{lemma}[Adapted from~\citet{pollard2015few}] \label{lemma:poisson-pollard}
If $X$ is a random variable with the distribution $\text{Poisson}(\lambda)$, then:
\begin{align*}
    \prob\left( X > \lambda + \varepsilon \right) &\le \exp\left( -\frac{\varepsilon^2}{2\lambda} \psi\left( \frac{\varepsilon}{\lambda} \right) \right) \\
    \prob\left( X < \lambda - \varepsilon \right) &\le \exp\left( -\frac{\varepsilon^2}{2\lambda} \psi\left( - \frac{\varepsilon}{\lambda} \right) \right) %
\end{align*}
where $\psi(t) = \frac{(1 + t) \log( 1 + t ) - t }{\nicefrac{t^2}{2}}$ for $ t \neq 0$ and $\psi(0) = 1$.
\end{lemma}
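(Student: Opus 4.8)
The plan is to prove both inequalities by the textbook exponential Markov (Chernoff) bound, using the moment generating function of the Poisson law and then reducing the optimized exponent to the form involving $\psi$. Recall first that for $X \sim \text{Poisson}(\lambda)$ one has $\E[e^{sX}] = \exp(\lambda(e^s-1))$ for every $s \in \R$. For the upper tail, fix $s > 0$ and apply Markov's inequality to $e^{sX}$:
\[
\prob(X > \lambda + \varepsilon) \;\le\; e^{-s(\lambda+\varepsilon)}\,\E[e^{sX}] \;=\; \exp\bigl(g(s)\bigr), \qquad g(s) := \lambda(e^s-1) - s(\lambda+\varepsilon).
\]
Since $g''(s) = \lambda e^s > 0$, $g$ is strictly convex, so its unique critical point is a global minimizer; solving $g'(s) = \lambda e^s - (\lambda+\varepsilon) = 0$ gives $s^\star = \log(1 + \varepsilon/\lambda) > 0$. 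Substituting back and writing $t = \varepsilon/\lambda$, a short calculation gives $g(s^\star) = \lambda\bigl(t - (1+t)\log(1+t)\bigr) = -\lambda\bigl((1+t)\log(1+t) - t\bigr)$. Now I would invoke the very definition of $\psi$: by construction $(1+t)\log(1+t) - t = \tfrac{t^2}{2}\psi(t)$, so $g(s^\star) = -\lambda\cdot\tfrac{t^2}{2}\psi(t) = -\tfrac{\varepsilon^2}{2\lambda}\psi(\varepsilon/\lambda)$, which is exactly the first claimed bound.

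For the lower tail I would run the symmetric argument with a negative exponential parameter. If $\varepsilon \ge \lambda$ then $\{X < \lambda - \varepsilon\} \subseteq \{X < 0\}$ has probability $0$ (the bound is then trivial), so assume $0 < \varepsilon < \lambda$. For $s > 0$, Markov applied to $e^{-sX}$ gives
\[
\prob(X < \lambda - \varepsilon) \;\le\; e^{s(\lambda - \varepsilon)}\,\E[e^{-sX}] \;=\; \exp\bigl(h(s)\bigr), \qquad h(s) := \lambda(e^{-s}-1) + s(\lambda - \varepsilon),
\]
again strictly convex in $s$. The minimizer is $s^\star = -\log(1 - \varepsilon/\lambda) > 0$, and with $t = -\varepsilon/\lambda \in (-1,0)$ one computes $h(s^\star) = \lambda\bigl(t - (1+t)\log(1+t)\bigr) = -\tfrac{\varepsilon^2}{2\lambda}\psi(-\varepsilon/\lambda)$, which is the second bound. (Alternatively, one may simply cite the derivation in Pollard's note, but the self-contained argument above is short.)

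I expect no genuine obstacle here: the content is the routine Chernoff computation for a Poisson variable. The only points needing a little care are (i) verifying that both critical points $s^\star$ are positive, so that the one-sided Markov bounds are legitimate, together with the convexity that makes them global minimizers; (ii) the edge case $\varepsilon \ge \lambda$ for the lower tail; and (iii) the bookkeeping that rewrites $(1+t)\log(1+t) - t$ as $\tfrac{t^2}{2}\psi(t)$, which is immediate from the definition of $\psi$, with the convention $\psi(0)=1$ making the identity continuous at $t=0$ and thereby handling $\varepsilon = 0$ trivially.
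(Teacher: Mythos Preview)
Your argument is correct and complete: the standard Chernoff/exponential-Markov computation with the Poisson MGF, optimized over the tilt parameter, yields exactly the stated exponents, and you have handled the sign of $s^\star$, the convexity, and the lower-tail edge case $\varepsilon \ge \lambda$ properly.

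As for comparison with the paper: there is nothing to compare. The paper does not prove this lemma; it is simply quoted (``Adapted from \citet{pollard2015few}'') and then invoked in the proof of \cref{lemma:concentration-full}. Your write-up is therefore strictly more than what the paper provides, supplying a short self-contained derivation in place of a citation.
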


Assume that we are given a finite sequence of observation times $\{ y_0 := 0 \}\, \cup\, (y_{n})_{n=1}^{N}$ in advance.
Define $x_{i}$ the number of events of the Poisson process which we observe taking place in the interval $[y_{i-1}, y_i)$.
One can contrast it with the partial observability setting by comparing it with the definition of $o_n$ given in~\eqref{eqn:observation}.
We will use the following estimator:
\begin{align}
\txi = \frac{1}{y_N} \sum_{i=1}^N x_i \label{eqn:xihat-full-definition}
\end{align}
and then clip it to obtain $\hxi = \max{\{ \xi_{\min}, \min{\{\xi_{\max}, \txi \}} \}}$.

\begin{restatable}[]{lem}{concentrationfull}
\label{lemma:concentration-full}
Given observations times $(y_n)_{n=0}^N$ and observation of number of events $(x_n)_{n=1}^N$ for a Poisson process with rate $\xi$, for any $\delta \in (0, 1)$, it holds that
\begin{align*}
    \prob\left( |\hxi - \xi| > \sqrt{\frac{2\xi_{\max}}{y_N \Mcal} \log{\frac{2}{\delta}}} \right) < \delta
\end{align*}
where $\hxi = \max\{\xi_{\min}, \min\{\xi_{\max}, \txi\}\}$ and
$\txi$ is obtained by solving~\eqref{eqn:xihat-full-definition}, and $\Mcal = \psi\left( \frac{\xi_{\max}}{\xi_{\min}} - 1 \right)$.
\end{restatable}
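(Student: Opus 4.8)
The plan is to mirror the structure of the proof of Lemma~\ref{lemma:concentration}: first establish a deterministic relationship between the deviation of the estimator and the deviation of a sum of Poisson counts, then apply the concentration bound for Poisson random variables (Lemma~\ref{lemma:poisson-pollard}), and finally combine the two. The key simplification in the full-observability case is that $\sum_{n=1}^N x_n$ is itself a Poisson random variable with mean $\xi y_N$, since it counts the number of events of the Poisson process in $[0, y_N)$ (the increments over the intervals $[y_{n-1}, y_n)$ are independent Poissons whose rates sum to $\xi y_N$). So $\txi = \frac{1}{y_N}\sum_{n=1}^N x_n$ satisfies $y_N \txi \sim \text{Poisson}(\xi y_N)$, with $\E[\txi] = \xi$.

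First I would reduce the clipped estimator to the unclipped one: since $\xi \in [\xi_{\min}, \xi_{\max}]$, the clipping map is a contraction toward the interval containing $\xi$, so $|\hxi - \xi| \le |\txi - \xi|$ always. Hence it suffices to bound $\prob(|\txi - \xi| > \varepsilon')$ for the appropriate $\varepsilon'$. Writing $X = y_N \txi \sim \text{Poisson}(\lambda)$ with $\lambda = \xi y_N$, the event $\{|\txi - \xi| > \varepsilon'\}$ is $\{|X - \lambda| > y_N \varepsilon'\}$. Applying both tail bounds of Lemma~\ref{lemma:poisson-pollard} with deviation $\varepsilon = y_N \varepsilon'$ and using the union bound gives
\[
\prob(|\txi - \xi| > \varepsilon') \le \exp\!\left(-\tfrac{\varepsilon^2}{2\lambda}\psi\!\left(\tfrac{\varepsilon}{\lambda}\right)\right) + \exp\!\left(-\tfrac{\varepsilon^2}{2\lambda}\psi\!\left(-\tfrac{\varepsilon}{\lambda}\right)\right).
\]
Next I would control the $\psi$ factors from below by a constant. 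The function $\psi$ is decreasing on its domain, so $\psi(\varepsilon/\lambda) \ge \psi(t)$ for any upper bound $t$ on $\varepsilon/\lambda = \varepsilon'/\xi$, and similarly $\psi(-\varepsilon/\lambda) \ge \psi(0) = 1 \ge \psi(t)$ for $t \ge 0$. The natural choice is to argue that a meaningful confidence radius $\varepsilon'$ never needs to exceed the width of the parameter range, $\varepsilon' \le \xi_{\max} - \xi_{\min}$ (if the bound is larger than this, it is vacuous since $|\hxi - \xi| \le \xi_{\max} - \xi_{\min}$ trivially, so the stated inequality holds automatically); this yields $\varepsilon'/\xi \le (\xi_{\max}-\xi_{\min})/\xi_{\min} = \xi_{\max}/\xi_{\min} - 1$, hence both $\psi$-factors are at least $\mathcal{M} = \psi(\xi_{\max}/\xi_{\min} - 1)$. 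Then each exponential is at most $\exp(-\tfrac{\varepsilon'^2 y_N}{2\xi}\mathcal{M}) \le \exp(-\tfrac{\varepsilon'^2 y_N}{2\xi_{\max}}\mathcal{M})$, so the sum is at most $2\exp(-\tfrac{\varepsilon'^2 y_N \mathcal{M}}{2\xi_{\max}})$. Setting this equal to $\delta$ and solving gives $\varepsilon' = \sqrt{\tfrac{2\xi_{\max}}{y_N \mathcal{M}}\log\tfrac{2}{\delta}}$, which is exactly the claimed radius.

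The main obstacle is the handling of the $\psi$ factors: Lemma~\ref{lemma:poisson-pollard} gives bounds whose exponents depend on $\varepsilon/\lambda$ through $\psi$, and to get a clean closed-form radius one must bound $\psi$ by a constant over the relevant range, which forces the dichotomy between the "informative" regime ($\varepsilon' \le \xi_{\max}-\xi_{\min}$, where monotonicity of $\psi$ gives the constant $\mathcal M$) and the "vacuous" regime (where the bound holds trivially from clipping). I would also need to double-check the direction of monotonicity of $\psi$ and that $\psi > 0$ on the whole relevant interval, and confirm that $\sum_n x_n$ is genuinely $\text{Poisson}(\xi y_N)$ — i.e. that the observation intervals tile $[0, y_N)$ — which is immediate from the definition $x_n = \#\{\text{events in }[y_{n-1}, y_n)\}$ with $y_0 = 0$.
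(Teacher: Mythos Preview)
Your proposal is correct and follows essentially the same route as the paper's proof: identify $y_N\txi\sim\mathrm{Poisson}(\xi y_N)$, reduce $\hxi$ to $\txi$ via the clipping contraction, apply the two-sided Poisson tail bound (Lemma~\ref{lemma:poisson-pollard}), lower bound both $\psi$ factors by $\mathcal M=\psi(\xi_{\max}/\xi_{\min}-1)$ using monotonicity of $\psi$ and the fact that the relevant deviation never exceeds $\xi_{\max}-\xi_{\min}$, and then invert. Your case split between the informative and vacuous regimes is exactly the role played in the paper by the substitution $\varepsilon'=\min\{\varepsilon,\xi_{\max}-\xi_{\min}\}$.
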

\begin{proof}
We know that $\txi = \sum_{i=1}^{N} x_i \sim \text{Poisson}(\xi \times y_N)$. Then using~\cref{lemma:poisson-pollard}, substituting $\varepsilon$ by $\varepsilon \times y_N$, we get
\begin{align}
    \prob\left( \txi < \xi + \varepsilon \right) &\le \exp\left( -\frac{\varepsilon^2 y_N}{2 \xi} {\psi\left(\frac{\varepsilon}{\xi}\right)} \right)\label{eqn:full-obs-lb}\\
    \prob\left( \txi > \xi - \varepsilon \right) &\le \exp\left( -\frac{\varepsilon^2 y_N}{2 \xi} \psi\left(-\frac{\varepsilon}{\xi}\right) \right)
    \label{eqn:full-obs-ub}
\end{align}

Since $\hxi = \max{ \{ \xi_{\min}, \min{ \{ \xi_{\max}, \txi \} } \} }$, we have $\xi, \hxi \in [\xi_{\min}, \xi_{\max}]$ and, hence,
\begin{align}
    |\txi - \xi| &\ge | \hxi - \xi | \label{eqn:bound-hxi-txi}\\
    \text{ and } \quad | \hxi - \xi | &\le \xi_{\max} - \xi_{min} \label{eqn:max-epsilon} \\
    \implies \frac{| \hxi - \xi |}{\xi} &\le \left( \frac{\xi_{\max}}{\xi_{\min}} - 1 \right) \label{eqn:max-epsilon-ratio}
\end{align}
Define $\varepsilon' = \min{ \{\varepsilon, \xi_{\max} - \xi_{min} \} }$.

It is easy to show that $\psi(t)$ is a decreasing function of $t$ and, therefore, we have:
\begin{align}
    \psi\left( - \frac{\varepsilon'}{\xi} \right) \ge \psi\left( \frac{\varepsilon'}{\xi} \right) \ge \underbrace{\psi \left( \frac{\xi_{\max}}{\xi_{\min}} - 1 \right)}_{\Mcal}
\end{align}
where we have used inequality~\eqref{eqn:max-epsilon-ratio} in the last step.

The equations~\eqref{eqn:full-obs-lb} and~\eqref{eqn:full-obs-ub} can be combined and written as the following:
\begin{align*}
    \prob\left( |\hxi - \xi| > \varepsilon \right) &\le \prob\left( |\hxi - \xi| > \varepsilon' \right) \\
    &\le 2\exp{\left( -\frac{\varepsilon'^2 y_N}{2\xi_{\max}}  \psi\left(\frac{\varepsilon'}{\xi_{\min}}\right) \right)} \\
    &\le 2\exp{\left( -\frac{\varepsilon^2 y_N \Mcal}{2\xi_{\max}} \right)}
\end{align*}
where the first and the last inequalities follow since $\varepsilon' \le \varepsilon$.

Setting $\delta = 2\exp{\left( -\frac{\varepsilon^2 y_N \Mcal}{2\xi_{\max}} \right)}$ and solving for $\varepsilon$, we get the desired result.
\end{proof}

For the uniform-interval exploration till time $\tau$, by using a union bound over all web-pages with~\cref{lemma:concentration-full}, one can obtain:
\begin{align}
    \prob\left( \forall i \in [m]: |\hxi_i - \xi_i| \le \sqrt{\frac{2\xi_{\max}}{\tau \Mcal} \log{\frac{2m}{\delta}}} \right) \ge 1 - \delta
    \label{eqn:full-union-bound}
\end{align}

One can compare~\eqref{eqn:full-union-bound} with~\cref{lemma:uiparameterconcentration} to see the benefits of having full observability, \textit{viz.}, (i) the bound does not depend on the bandwidth $R$ as even a single observation at time $\tau$ provides the sufficient statistic for parameter estimation, and (ii) the dependence of the bound on $\xi_{\max}$ is approximately $\sim \Ocal(\sqrt{\xi_{\max}})$ instead of $\sim \Ocal(e^{c\xi_{\max}})$.
However, since the dependence of the bound on the exploration time remains $\Ocal(\sqrt{\nicefrac{1}{{\tau}}})$, the regret suffered by the ETC algorithm is still $\Ocal(\sqrt{T})$.

\end{appendix}

\end{document}